\DeclareMathOperator*{\argmin}{arg\,min}
\newcommand{\R}{\mathbb{R}}
\renewcommand{\cal}[1]{\mathcal{#1}}
\newtheorem{theorem}{Theorem}[section]
\theoremstyle{definition}
\theoremstyle{definition}
\newtheorem{example}[theorem]{Example}
\title{Axiomatic Aggregations of Abductive Explanations}
\author[1]{Gagan Biradar}
\author[2]{Yacine Izza}
\author[1]{Elita Lobo}
\author[1]{Vignesh Viswanathan}
\author[1]{Yair Zick}
\affil[1]{University of Massachusetts, Amherst, USA} 
\affil[2]{CREATE, NUS, Singapore}
\date{}
\begin{document}

\maketitle

\begin{abstract}
The recent criticisms of the robustness of post hoc model approximation explanation methods (like LIME and SHAP) have led to the rise of model-precise abductive explanations. For each data point, abductive explanations provide a minimal subset of features that are sufficient to generate the outcome. 
While theoretically sound and rigorous, abductive explanations suffer from a major issue --- there can be several valid abductive explanations for the same data point. In such cases, providing a single abductive explanation can be insufficient; on the other hand, providing all valid abductive explanations can be incomprehensible due to their size. In this work, we solve this issue by aggregating the many possible abductive explanations into feature importance scores. We propose three aggregation methods: two based on power indices from cooperative game theory and a third based on a well-known measure of causal strength. We characterize these three methods axiomatically, showing that each of them uniquely satisfies a set of desirable properties. We also evaluate them on multiple datasets and show that these explanations are robust to the attacks that fool SHAP and LIME.
\end{abstract}

\section{Introduction}\label{sec:intro}
The increasing use of complex machine learning (predictive) models in high-stake domains like finance \citep{ozbayoglu2020finance} and healthcare \citep{pandey2022healthcare, qayyum2021healthcare} necessitates the design of methods to accurately explain the decisions of these models. 
Many such methods have been proposed by the AI community.
Most of these methods (like SHAP \citep{Lundberg2017Unified} and LIME \citep{Ribeiro2016Lime}) explain model decisions by sampling points and evaluating model behavior around a point of interest. 
While useful in many settings, this class of model approximation-based methods has faced criticisms for being unable to fully capture model behavior \citep{rudin2019stop,huang2023inadequacy} and being easily manipulable \citep{Slack2020Shap}. 
The main issue with these methods stems from the fact that model approximation-based explanation measures use the model's output on a small fraction of the possible input points.
This has led to the rise of model-precise {\em abductive explanations} \citep{shih2018abductive, ignatiev2019abductiveog} which use the underlying model's structure to compute rigorous explanations. 
Abductive explanations are simple: they provide a minimal set of features that are sufficient to generate the outcome. 
In other words, a set of features $S$ forms an abductive explanation for a particular point of interest $\vec x$ if no matter how we modify the values of the features outside $S$, the outcome will not change.

Despite being simple, concise, and theoretically sound, abductive explanations suffer from a major flaw --- there may be several possible abductive explanations for a given data point. 
Consider the following example:

\begin{example}
Suppose that we train a simple rule-based model $f$ for algorithmic loan approval, using the features `Age', `Purpose', `Credit Score', and `Bank Balance'. 
The rule-based model has the following closed-form expression:
\begin{align*}
    f(\vec x) = & (\text{Age} > 20 \land \text{Purpose} = \text{Education}) \\ &\lor (\text{Credit} > 700) \lor (\text{Bank} > 50000)
\end{align*}
In simple words, if the applicant has an age greater than 20 and is applying for education purposes, the loan is accepted; otherwise, if the applicant has a credit score greater than 700 or a bank account balance greater than 50000, the loan is accepted. 

Consider a user with the following details $\vec x = (\text{Age} = 30, \text{Purpose} = \text{Education}, \text{Credit} = 750, \text{Bank} = 60000)$. There are three abductive explanations for this point: (Age, Purpose), (Credit), and (Bank).
\end{example}

In this example, if we provide the abductive explanation (Age, Purpose) to the user, they can infer that their age and purpose played a big role in their decision. However, note that it would be incorrect to infer anything else. The user cannot even tell if the features which are absent from the explanation played any role in their acceptance. In fact, the user still does not know whether the feature Age (present in the explanation) was more important than the feature Credit Score (absent in the explanation). Arguably, Credit Score is more relevant than Age since it is present in a singleton abductive explanation. However, no user presented with only one abductive explanation can make this conclusion. 

We propose to aggregate abductive explanations into importance scores for each feature. 
Feature importance scores are an extremely well-studied class of explanations \citep{barocas2020explanations}. 
As seen with the widespread use of measures like SHAP and LIME, the simple structure of feature importance scores make it easy to understand and visualize. 
We propose to use these feature importance scores to give users a comprehensive understanding of model behavior that is impossible to obtain from a single abductive explanation.

\subsection{Our Contributions}\label{sec:contrib}
\textbf{Conceptual:} 
% We propose a generic axiomatically characterized framework [Name of the framework] for aggregating abductive explanations. 
We present three aggregation measures --- the Responsibility Index, the Deegan-Packel Index, and the Holler-Packel Index (Section \ref{sec:framework}). The Responsibility index is based on the degree of responsibility --- a well-known causal strength quantification metric \citep{chockler2004responsibility}. The Deegan-Packel and Holler-Packel indices are based on power indices from the cooperative game theory literature \citep{DeeganPackel1978, Holler1982OG, HollerPackel1983}.

\noindent \textbf{Theoretical:} For each of these measures, we present an axiomatic characterization, in line with theoretical results in the model explainability community \citep{Patel2021Bii, Lundberg2017Unified, Datta2016Qii, Sundarajan2020Shapley}. Since we deal with aggregating abductive explanations as opposed to conventional model outputs, our proof styles and axioms are novel.

\noindent \textbf{Empirical:} We empirically evaluate our measures, comparing them with well-known feature importance measures: SHAP \citep{Lundberg2017Unified} and LIME \citep{Ribeiro2016Lime}. 
Our experimental results (Section \ref{sec:expts}) demonstrate the robustness of our methods, showing specifically that they are capable of identifying biases in a model that SHAP and LIME cannot identify. 

\subsection{Related Work}\label{sec:related}
Abductive explanations were first formally defined in \citet{ignatiev2019abductiveog} as a generalization of prime implicant explanations defined in \citet{shih2018abductive}.
% \citet{shih2018abductive} first proposed the concept of prime implicant explanations for boolean classifiers; a definition later generalized by \citet{ignatiev2019abductiveog} Marques-Silva generalized this definition of an explantion for ML classifiers that allows data to be ordinal or categorical.  (AXp was proposed before PI-expl was published!! so we should *not* say that AXp builds on PI-expl)
For most commonly used machine learning models models, computing abductive explanations is an intractable problem; hence, computing abductive explanations for these models often requires using NP oracles (e.g. SAT/SMT, MILP, etc).
%SMT (or MILP) oracles. 
% The main focus of research in abductive explanations has been the highly non-trivial task of computing them. 
% (e.g. MILP-encoding for NNs, SMT-encoding for XGboost, SAT-encoding for RFs and DLs). (Note for less complex models like Naive Bayes or Decision Trees, abd expl is computed in poly-time)

These oracles have been used in different ways to compute abductive explanations for different classes of models. For example, MILP-encodings have been used for neural networks \citep{ignatiev2019abductiveog} and SMT-encodings have been used for tree ensembles \citep{silva2022treeensembles}. For less complex models such as monotonic classifiers and naive bayes classifiers, polynomial time algorithms to compute abductive explanations are known \citep{silva2020naivebayes, silva2021monotonic}.
% Computing abductive explanations using SAT solvers was first introduced by \citet{shih2018abductive} who worked primarily with Bayesian network classifiers. This work was extended by \citet{ignatiev2019abductiveog} who proposed algorithms to compute abductive explanations when the underlying model is a neural network. \citet{silva2021monotonic} and \citet{silva2020naivebayes} present polynomial time algorithms to compute abductive explanations when the underlying model is a monotonic classifier and a naive Bayes classifier respectively. \citet{silva2022treeensembles} present an efficient algorithm to compute abductive explanations when the underlying model is a tree ensemble. 

The main focus of these papers has been the runtime of the proposed algorithms. There are fewer papers analysing the quality of the output abductive explanations. Notably, the work of \citet{audemard2022preferred} is also motivated by the fact that there can be several abductive explanations for a single data point; however, the solution they propose is radically different from ours. They propose using the explainer's preferences over the set of explanations to find a {\em preferred} abductive explanation to provide to the user. 

More recently, \citet{huang2023inadequacy} observe that SHAP \citep{Lundberg2017Unified} often fails to identify features that are irrelevant to the prediction of a data point, i.e.\ assigns a positive score to features that never appear in any abductive explanations. They propose aggregating abductive explanations as an alternative to SHAP but do not propose any concrete measures to do so.
Our work answers this call with three axiomatically justified aggregation measures.

Parallel to our work\footnote{The work of \citet{Ignatiev-corr23} was developed independently and at the same time as ours, but we preferred to wait before we made our work public on arXiv.}, the work of \citet{Ignatiev-corr23} also builds on the observations of \citet{huang2023inadequacy} and develops a MARCO-like method~\citet{LiffitonPMM16} for computing feature importance explanations by aggregating abductive explanations.
Their work proposes two aggregation measures, {\it formal feature attribution (ffa)} and {\it weighted ffa}, that correspond exactly to the Holler-Packel and Deegan-Packel indices respectively. We remark, however, that their work does not offer an axiomatic characterization of these measures, and focuses primarily on empirical performance.

There has also been recent work generalizing abductive explanations to {\em probabilistic abductive explanations} \citep{waldchen2021probabilistic, ArenasBOS22, izza-ijar23}. Probabilistic abductive explanations allow users to trade-off precision for size, resulting in smaller explanations with lower precision i.e. smaller explanations which are not as robust as abductive explanations.
% More recently, \citet{yu2023formal} independently study the same problem and offer the same solution of aggregating abductive explanations. While some of our aggregation measures we propose are similar to theirs (the Deegan-Packel index and the Holler-Packel index), our work uniquely provides an axiomatic characterization of these methods and a connection to the cooperative game theory literature.
% Additionally,  

Our work also contributes novel feature importance measures. Feature importance measures have been well studied in the literature with measures like SHAP \citep{Lundberg2017Unified} and LIME \citep{Ribeiro2016Lime} gaining significant popularity. 
There are several other measures in the literature, many offering variants of the Shapley value \citep{Sundarajan2020Shapley, Frye2020Assymetric, Sundarajan2017NeuralNetworks}. Other works use the Banzhaf index \citep{Patel2021Bii} and necessity and sufficiency scores \citep{Galhotra2021Contrastive, watson2021local}.

% It is also worth noting that abductive explanations (and therefore our research) bears similarity with the general class of rule-based explanations measures. While many such methods exist, arguably the two most popular methods are Anchor and Lore. Anchor \citep{Ribeiro2018Arrow} and Lore \citep{guidotti2018lore}, like abductive explanations, present sufficient conditions for a model decision. However, these rule-based explanations are computed heuristically and lack the theoretical soundness offered by abductive explanations.

\section{Preliminaries}\label{sec:preliminaries}
We denote vectors by $\vec x$ and $\vec y$. We denote the $i$-th and $j$-th indices of the vector $\vec x$ using $x_i$ and $x_j$. 
Given a set $S$, we denote the restricted vector containing only the indices $i \in S$ using $\vec x_S$. 
We also use $[k]$ to denote the set $\{1, 2, \dots, k\}$.

We have a set of features $N = \{1, 2, \dots, n\}$, where each $i \in N$ has a domain $\cal X_i$. We use $\cal X = \bigtimes_{i \in N} \cal X_i$ to denote the domain of the input space.
We are given a {\em model of interest} $f \in \cal F$ that maps input vectors $\vec x \in \cal X$ to a binary output variable $y \in \{0, 1\}$. 
% \subsection{Local Post-hoc Explanations}\label{subsec:problem}
In the local post hoc explanation problem, we would like to explain the output of the model of interest $f$ on a {\em point of interest} $\vec x$.
We work with two forms of model explanations in this paper.

The first is that of {\em feature importance weights} (or {\em feature importance scores}): feature importance weights provide a score to each feature proportional to their importance in the generation of the outcome $f(\vec x)$. Commonly used feature importance measures are LIME \citep{Ribeiro2016Lime} and SHAP \citep{Lundberg2017Unified}.

Second, an {\em abductive explanation} for a point of interest $\vec x$ is a minimal subset of features which are sufficient to generate the outcome $f(\vec x)$. More formally, an abductive explanation (as defined by \citet{silva2022treeensembles}) corresponds to a subset minimal set of features $S$ such that:
\begin{align}
    \forall \vec y \in \cal X, \,\, \big ( \vec y_S = \vec x_S \big ) \implies \big ( f(\vec y) = f(\vec x) \big ) \label{eq:abductive-explanation}
\end{align}

By subset minimality, if $S$ satisfies \eqref{eq:abductive-explanation}, then no proper subset of $S$ satisfies \eqref{eq:abductive-explanation}. We use $\cal M(\vec x, f)$ to denote the set of abductive explanations for a point of interest $\vec x$ under a model of interest $f$. We also use $\cal M_i(\vec x, f)$ to denote the subset of $\cal M(\vec x, f)$ containing all the abductive explanations with the feature $i$.
Our goal is to create aggregation measures that maps $\cal M(\vec x, f)$ to an importance score for each feature $i \in N$.

\subsection{A Cooperative Game Theory Perspective}\label{subsec:set-function}
In this paper, we propose to aggregate abductive explanations into feature importance scores. A common approach used to compute feature importance scores is via modeling the problem as a cooperative game \citep{Patel2021Bii, Datta2016Qii, Lundberg2017Unified}.
This formulation allows us to both, tap into the existing literature on power indices (like the Shapley value) to create feature importance measures, as well as use theoretical techniques from the literature to provide axiomatic characterizations for new measures. In this paper, we do both.

A simple cooperative game \citep{Wooldridge2011} $(N, v)$ is defined over a set of players $N$ and a monotone\footnote{Recall that a set function $v$ is monotonic if for all $S \subseteq T \subseteq N$, $v(S) \le v(T)$.} binary value function $v:2^N \mapsto \{0, 1\}$. The set of players, in our setting (and several others \citep{Patel2021Bii, Datta2016Qii, Lundberg2017Unified}), are the features of the model of interest $N$. 
The value function $v$ loosely represents the value of each (sub)set of players; in model explanations, the value function represents the joint importance of a set of features in generating the outcome. 

A set $S \subseteq N$ is referred to as a {\em minimal winning set} if $v(S) = 1$ and for all proper subsets $T \subset S$, $v(T) = 0$. 
Minimal winning sets are a natural analog of abductive explanations in the realm of cooperative game theory. There are specific power indices like the Deegan-Packel index \citep{DeeganPackel1978} and the Holler-Packel index \citep{HollerPackel1983, Holler1982OG} which take as input the set of all minimum winning sets and output a score corresponding to each player (in our case, feature) in the cooperative game. These measures are natural candidates to convert abductive explanations into feature importance scores.

\section{A Framework for Abductive Explanation Aggregation}\label{sec:framework}

\begin{table*}[t]
    \centering
    \begin{tabularx}{\textwidth}{>{\hsize = 0.33\hsize}X>{\hsize = 0.33\hsize}X>{\hsize = 0.33\hsize}X}
        \toprule
         \textbf{Measure} &  \textbf{$\alpha$-Monotonicity} & \textbf{$C$-Efficiency}  \\
        \midrule
        Holler-Packel Index \newline $\eta_i(\vec x, f) = |\cal M_i(\vec x, f)|$ & $\alpha(\cal S) = \cal S$ and $\alpha(\cal S) \le \alpha(\cal T)$ iff $\cal S \subseteq \cal T$ & $C(\vec x, f) = \sum_{i \in N} |\cal M_i(\vec x, f)|$ \\
        \midrule
         Deegan-Packel Index \newline $\phi_i(\vec x, f) = \sum_{S \in \cal M_i(\vec x, f)}\frac1{|S|}$ & $\alpha(\cal S) = \cal S$ and $\alpha(\cal S) \le \alpha(\cal T)$ iff $\cal S \subseteq \cal T$ & $C(\vec x, f) = |\cal M(\vec x, f)|$  \\
         \midrule
         Responsibility Index \newline $\rho_i(\vec x, f) = \max_{S \in \cal M_i(\vec x, f)}\frac1{|S|}$ & $\alpha(\cal S) = -\min_{S \in \cal S} |S|$ & NA  \\
        \bottomrule
    \end{tabularx}
    \vspace{0.2cm}
    \caption{A summary of the $\alpha$ and $C$ values from the Monotonicity and Efficiency properties respectively of each measure defined in this paper. All three measures satisfy Symmetry and Null Feature. The Responsibility index satisfies an alternative efficiency property which is incomparable to $C$-efficiency.}
    \label{tab:property-summary}
\end{table*}

Formally, we define an {\em abductive explanation aggregator} (or simply an {\em aggregator}) as a function that maps a point $\vec x$ and a model $f$ to a vector in $\R^n$ using only the abductive explanations of the point $\vec x$ under the model $f$; the output vector can be interpreted as importance scores for each feature. 
% We will denote aggregators using the greek alphabets $\eta$, $\phi$, $\rho$ and $\beta$. 
For any arbitrary aggregator $\beta: \cal{X} \times \cal{F} \rightarrow \R^n$, we use $\beta_i(\vec x, f)$ as the importance weight given to the $i$-th feature for a specific datapoint-model pair $(\vec x, f)$. 

In order to design meaningful aggregators, we take an axiomatic approach: we start with a set of desirable properties and then find the unique aggregator which satisfies these properties. This is a common approach in explainable machine learning \citep{Datta2016Qii, Lundberg2017Unified, Sundarajan2017NeuralNetworks, Patel2021Bii}. The popular Shapley value \citep{Young1985Shapley} is the unique measure that satisfies four desirable properties --- Monotonicity, Symmetry, Null Feature, and Efficiency.

However, the exact definitions of these four properties in the characterization of the Shapley value do not extend to our setting (see Appendix \ref{apdx:shapley-value}). Moreover, the Shapley value does not aggregate abductive explanations (or more generally, minimal winning sets). Therefore, for our axiomatic characterization, we formally define variants of these properties, keeping the spirit of these definitions intact. We present these definitions below.

\noindent \textbf{$\alpha$-Monotonicity:} Let $\alpha$ be some function that quantifies the relevance of a set of abductive explanations a feature $i$ is present in. A feature importance score is monotonic with respect to $\alpha$ if for each feature $i$ and dataset model pair $(\vec x, f)$, the importance score given to $i$ is monotonic with respect to $\alpha(\cal M_i(\vec x, f))$. 

In simple words, the higher the rank of the set of abductive explanations containing a feature (according to $\alpha$), the higher their importance scores. 
The ranking function $\alpha$ can capture several intuitive desirable properties. 
For example, if we want features present in a larger number of abductive explanations to receive a higher score, we can simply set $\alpha(\cal S) = |\cal S|$. Otherwise, if we want features present in smaller explanations to receive a higher score, we set $\alpha(\cal S) = - \min_{S \in \cal S} |S|$.
% For example, we may want an aggregator to output importance scores proportional to the number of abductive explanations each feature is present in. The monotonicity requirement is used to capture such properties.

Formally, let $\alpha: 2^{2^N} \mapsto \cal Y$ be a function that ranks sets of abductive explanations, i.e., maps every set of abductive explanations to a partially ordered set $\cal Y$. 
An aggregator $\beta$ is said to satisfy $\alpha$-monotonicity if for any two datapoint-model pairs $(\vec x, f)$ and $(\vec y, g)$ and a feature $i$, $\alpha(\cal M_i(\vec x, f)) \le  \alpha(\cal M_i(\vec y, g))$ implies $\beta_i(\vec x, f) \le \beta_i(\vec y, g)$. 
Additionally,  if the feature $i$ has the same set of abductive explanations under $(\vec x, f)$ and $(\vec y,g)$ --- i.e., $\cal M_i(\vec x, f) = \cal M_i(\vec y, g)$ --- then $\beta_i(\vec x, f) = \beta_i(\vec y, g)$.

\noindent \textbf{Symmetry:} This property requires that the index of a feature should not affect its score. That is, the score of feature $i$ should not change if we change its position.  
Given a permutation $\pi:N \rightarrow N$, we define $\pi \vec x$ as the reordering of the feature values in $\vec x$ according to $\pi$. 
% More formally, $(\pi \vec x)_j = x_{\pi^{-1}(j)}$ for all $j \in N$. 
In addition, given a permutation $\pi:N \rightarrow N$, we define $\pi f$ as the function that results from permuting the input point using $\pi$ before computing the output. More formally, $\pi f(\vec x) = f(\pi \vec x)$. We are now ready to formally define the symmetry property:

An aggregator $\beta$ satisfies symmetry if for any datapoint-model pair $(\vec x, f)$ and a permutation $\pi$, $\pi \beta(\vec x, f) = \beta(\pi \vec x, \pi^{-1} f)$.

\noindent \textbf{Null Feature:} if a feature is not present in {\em any} abductive explanation, it is given a score of $0$. 
This property explicitly sets a baseline value for importance scores. 
More formally, an aggregator $\eta$ satisfies Null Feature if for any datapoint-model pair $(\vec x, f)$ and any feature $i$, $\cal M_i(\vec x, f) = \emptyset$ implies that $\eta_i(\vec x, f) = 0$.

\noindent \textbf{$C$-Efficiency:} This property requires the scores output by aggregators to sum up to a {\em fixed value}; in other words, for any datapoint-model pair $(\vec x, f)$, $\sum_{i \in N} \beta_i(\vec x, f)$ must be a fixed value. 
Not only does efficiency bound the importance scores, but it also ensures that features are not always given a trivial score of $0$. 
The fixed value may depend on the aggregator $\beta$, the model $f$, and the datapoint $\vec x$. 
To capture this, we define a function $C$ that maps each datapoint-model pair $(\vec x, f)$ to a real value.

An aggregator $\beta$ is $C$-efficient if for any datapoint-model pair $(\vec x, f)$, $\sum_{i \in N} \beta_i(\vec x, f) = C(\vec x, f)$.

We deliberately define the above properties flexibly. 
There are different reasonable choices of $\alpha$-monotonicity and $C$-efficiency --- each leading to a different aggregation measure (Table~\ref{tab:property-summary}). 
In what follows, we formally present these choices and mathematically find the measures they characterize.
It is worth noting, as shown by \citet{huang2023inadequacy}, that the popular SHAP framework fails to satisfy the Null Feature property while all the measures we propose in this paper are guaranteed to satisfy the Null Feature property. 

\subsection{The Holler-Packel Index}\label{subsec:holler-packel}
We start with the Holler-Packel index, named after the power index in cooperative game theory \citep{Holler1982OG, HollerPackel1983}. The Holler-Packel index measures the importance of each feature as the number of abductive explanations that contain it. More formally, the Holler-Packel index of a feature $i$ (denoted by $\eta_i(\vec x, f)$) is given by
\begin{align}
    \eta_i(\vec x, f) = |\cal M_i(\vec x, f)| \label{eq:holler-packel}
\end{align}

The Holler-Packel index satisfies a property we call {\em Minimal Monotonicity}. This property corresponds to $\alpha$-Monotonicity when $\alpha(\cal S) = \cal S$ and $\alpha(S) \le \alpha(T)$ if and only if $\cal S \subseteq \cal T$. Minimal Monotonicity (loosely speaking) ensures that features present in a larger number of abductive explanations get a higher importance score. 

The Holler-Packel index also satisfies $C$-Efficiency where $C(\vec x, f)$ is defined as $\sum_{i \in N} |\cal M_i(x, f)|$. We refer to this property as $(\sum_{i \in N} |\cal M_i(x, f)|)$-Efficiency for clarity. 

Our first result shows that the Holler-Packel index is the only index that satisfies Minimal Monotonicity, Symmetry, Null Feature, and $(\sum_{i \in N} |\cal M_i(x, f)|)$-Efficiency.

\begin{restatable}{theorem}{thmhollerpackelaxioms}\label{thm:holler-packel-axioms}
The only aggregator that satisfies Minimal Monotonicity, Symmetry, Null Feature, and $(\sum_{i \in N} |\cal M_i(x, f)|)$-Efficiency is the Holler-Packel index given by \eqref{eq:holler-packel}.
\end{restatable}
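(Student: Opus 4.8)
The plan is to prove the two directions separately: that the Holler-Packel index $\eta$ satisfies the four axioms (existence), and that it is the unique such aggregator (uniqueness). The existence direction I expect to be routine. The quantity $\eta_i(\vec x,f)=|\cal M_i(\vec x,f)|$ is a monotone function of the collection $\cal M_i(\vec x,f)$ under set inclusion, so Minimal Monotonicity holds; it is manifestly $0$ when $\cal M_i(\vec x,f)=\emptyset$, giving Null Feature; summing over $i$ yields $\sum_{i\in N}|\cal M_i(\vec x,f)|=C(\vec x,f)$ by the very definition of $C$, giving Efficiency; and the number of explanations containing a feature is invariant under relabeling, which gives Symmetry once one checks that $\cal M(\pi\vec x,\pi^{-1}f)=\{\pi(S):S\in\cal M(\vec x,f)\}$. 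I would dispatch this in a few lines and concentrate on uniqueness.

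For uniqueness, let $\beta$ be any aggregator satisfying all four properties. The first step is to observe that the equality clause of Minimal Monotonicity forces $\beta_i(\vec x,f)$ to depend only on the collection $\cal M_i(\vec x,f)$, so I may write $\beta_i(\vec x,f)=h(\cal M_i(\vec x,f))$ for a single function $h$ on antichains. Symmetry then strengthens this: if two features have the same collection and are interchanged by a permutation that preserves the instance, they receive equal scores. I also need a realizability fact: every antichain $\cal S$ of subsets of $N$ equals $\cal M(\vec x,f)$ for some binary model $f$ and point $\vec x$ (take every domain to be $\{0,1\}$, $\vec x$ the all-ones point, and $f(\vec y)=1$ iff $\vec y$ dominates the indicator of some $S\in\cal S$; its minimal sufficient subsets are then exactly $\cal S$). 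This lets me evaluate $h$ on any antichain by constructing a convenient test instance.

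The core of the argument is to show $h(\cal S)=|\cal S|$ by induction on $m=|\cal S|$. The base case $m=0$ is exactly Null Feature. For the inductive step, fix $\cal S=\{S_1,\dots,S_m\}$ with marked feature $i$ and realize the instance whose abductive explanations are precisely $\cal S$. Let $K=\{j\in N:\ j\in S_t\ \text{for all}\ t\}$ be the set of features lying in every explanation, and note $i\in K$. Every feature $j\notin K$ misses some explanation, so $|\cal M_j|<m$ and the induction hypothesis gives $\beta_j=|\cal M_j|$. Every feature $j\in K$ has $\cal M_j=\cal S=\cal M_i$, and the transposition swapping $i$ and $j$ fixes each $S_t$ (both endpoints already lie in $S_t$), hence preserves the instance, so Symmetry yields $\beta_j=\beta_i$. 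Substituting into $C$-Efficiency,
\[
|K|\,\beta_i+\sum_{j\notin K}|\cal M_j| \;=\; \sum_{j\in N}|\cal M_j| \;=\; |K|\,m+\sum_{j\notin K}|\cal M_j|,
\]
the sums over $j\notin K$ cancel, leaving $|K|\,\beta_i=|K|\,m$, i.e.\ $\beta_i=m=|\cal S|$. This closes the induction and shows $\beta=\eta$.

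I expect two points to carry the weight. The first is the realizability fact; although the all-ones construction above is short, the statement that arbitrary antichains occur as exact abductive-explanation sets is what licenses the entire test-instance methodology, and I would prove it carefully (or cite it from the preliminaries). The second, genuinely load-bearing step is the treatment of the core $K$: the cancellation in the efficiency equation works only because the features with $|\cal M_j|=m$ are \emph{exactly} those in $K$ --- which holds since $\cal M_j\subseteq\cal S$ together with $|\cal M_j|=m=|\cal S|$ forces $\cal M_j=\cal S$ --- and because Symmetry collapses all of them to the common value $\beta_i$. It is worth noting that this uniqueness argument leans on the \emph{equality} clause of Minimal Monotonicity together with Symmetry, Null Feature, and Efficiency; the order-monotonicity inequality itself is needed only to verify that $\eta$ satisfies the axioms in the existence direction.
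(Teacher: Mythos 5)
Your proposal is correct and follows essentially the same route as the paper's proof: induction on the number of abductive explanations, using Null Feature for the base case, the equality clause of Minimal Monotonicity (via a realized smaller instance) to pin down features missing from some explanation, and Symmetry plus $C$-Efficiency to handle the features in the common intersection. The only differences are cosmetic refinements --- you induct on $|\cal M_i|$ rather than $|\cal M|$ and you spell out the antichain-realizability construction that the paper dismisses as ``trivially exists.''
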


The Holler-Packel index was used as a heuristic abductive explanation aggregator in prior work under the term `hit rate' \citep{silva2020naivebayes}. Theorem \ref{thm:holler-packel-axioms} theoretically justifies the hit rate.

\subsection{The Deegan-Packel Index}\label{subsec:deegan-packel}
Next, we present the Deegan-Packel index. 
This method is also named after the similar game-theoretic power index \citep{DeeganPackel1978}. 
The Deegan-Packel index, like the Holler-Packel index, counts the number of abductive explanations a feature is included in but unlike the Holler-Packel index, each abductive explanation is given a weight inversely proportional to its size. This ensures that smaller abductive explanations are prioritized over larger abductive explanations. 
Formally, the Deegan-Packel index is defined as follows:
\begin{align}
    \phi_i(\vec x, f) = \sum_{S \in \cal M_i(\vec x, f)}\frac{1}{|S|} \label{eq:deegan-packel}
\end{align}

Note that this aggregator also satisfies Minimal Monotonicity, Symmetry, and Null Feature. However, the Deegan-Packel index satisfies a different notion of $C$-Efficiency. The efficiency notion satisfied by the Deegan-Packel index corresponds to $C$-Efficiency where $C(\vec x, f)$ is defined as $|\cal M(\vec x, f)|$. We refer to this efficiency notion as \newline $|\cal M(\vec x, f)|$-Efficiency for clarity.

Our second result shows that the Deegan-Packel index uniquely satisfies Minimal Monotonicity, Symmetry, Null Feature, and $|\cal M(\vec x, f)|$-Efficiency.
\begin{restatable}{theorem}{thmdeeganpackelaxioms}\label{thm:deegan-packel-axioms}
The only aggregator that satisfies Minimal Monotonicity, Symmetry, Null Feature, and $|\cal M(\vec x, f)|$-Efficiency is the Deegan-Packel index given by \eqref{eq:deegan-packel}.
\end{restatable}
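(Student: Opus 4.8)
The plan is to prove the two directions separately: that the Deegan-Packel index $\phi$ satisfies the four axioms, and that it is the only aggregator that does. For the existence direction, I would check each axiom directly; all but efficiency are immediate. Null Feature holds because $\cal M_i(\vec x, f) = \emptyset$ makes the defining sum empty. Minimal Monotonicity holds because $\phi_i$ is a sum of nonnegative terms indexed by $\cal M_i(\vec x, f)$, so enlarging this collection can only increase the score, and leaving it unchanged leaves $\phi_i$ unchanged. Symmetry holds because relabelling the features by a permutation $\pi$ relabels every abductive explanation consistently, leaving all the sizes $|S|$ intact. The one computation worth doing is $|\cal M(\vec x, f)|$-efficiency, which I would obtain by exchanging the order of summation:
\begin{align*}
\sum_{i \in N} \phi_i(\vec x, f) = \sum_{i \in N} \sum_{S \in \cal M_i(\vec x, f)} \frac{1}{|S|} = \sum_{S \in \cal M(\vec x, f)} \sum_{i \in S} \frac{1}{|S|} = \sum_{S \in \cal M(\vec x, f)} 1 = |\cal M(\vec x, f)|.
\end{align*}

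For uniqueness, let $\beta$ be any aggregator satisfying the four axioms. The first step is a reduction: the equality clause of Minimal Monotonicity says $\beta_i(\vec x, f)$ is determined by $\cal M_i(\vec x, f)$ alone, and Symmetry makes this dependence invariant under relabelling the ground set. Hence it suffices to compute $\beta_i$ on any convenient instance realizing a prescribed collection $\cal M_i(\vec x, f)$. I would record here the realizability fact that any antichain (clutter) of feature subsets is exactly the set of abductive explanations of some model--point pair --- e.g. the disjunction of conjunctions $f = \bigvee_{S} \bigwedge_{u \in S}(y_u = x_u)$ has precisely those sets as abductive explanations --- which lets me build whatever instance I need.

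Next I would settle the base case of a single explanation: when $\cal M(\vec x, f) = \{S\}$ with $|S| = k$, Null Feature gives $\beta_u = 0$ for $u \notin S$, Symmetry forces the $k$ features of $S$ to share a common value, and $|\cal M(\vec x, f)|$-efficiency (here the total is $1$) splits that value evenly, yielding $\beta_u = \tfrac{1}{k} = \tfrac{1}{|S|}$. This is exactly where the Deegan-Packel efficiency constant $|\cal M(\vec x, f)|$, rather than the Holler-Packel constant, produces the $\tfrac{1}{|S|}$ weighting.

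Finally, for a general feature $i$ with $\cal M_i(\vec x, f) = \{S_1, \dots, S_m\}$ (each $S_j \ni i$, $|S_j| = k_j$), the plan is to exploit the reduction by evaluating $\beta_i$ on a tailor-made instance in which the explanations $S_1, \dots, S_m$ share only the feature $i$ and are otherwise built from disjoint fresh features. In this instance every non-$i$ feature belongs to exactly one explanation, so the base case assigns it $\tfrac{1}{k_j}$; summing these and subtracting from the efficiency total $|\cal M| = m$ leaves
\begin{align*}
\beta_i = m - \sum_{j=1}^m (k_j - 1)\cdot\frac{1}{k_j} = \sum_{j=1}^m \frac{1}{k_j} = \sum_{S \in \cal M_i(\vec x, f)} \frac{1}{|S|},
\end{align*}
and the reduction step transfers this value back to the original instance. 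I expect the main obstacle to be this last step: making the disjoint realization legitimate (checking that the $S_j$ still form an antichain so that $\cal M_i$ is preserved, and separately handling the degenerate case $|S_j| = 1$, which forces $m = 1$), and arguing carefully that Symmetry together with the monotonicity reduction genuinely pin the within-explanation scores to be equal rather than merely constraining their sum.
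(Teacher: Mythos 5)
Your existence direction and your single-explanation base case coincide with the paper's proof (the paper also starts its uniqueness argument from $|\cal M(\vec x,f)|=1$ and derives $1/|T|$ from Null Feature, Symmetry, and $|\cal M(\vec x,f)|$-Efficiency). The gap is in your final step. The equality clause of Minimal Monotonicity identifies $\beta_i(\vec x,f)$ with $\beta_i(\vec y,g)$ only when $\cal M_i(\vec x,f)=\cal M_i(\vec y,g)$ as collections of subsets of $N$, and Symmetry extends this at best to collections that agree up to a relabelling of the features. Your tailor-made instance replaces $S_1,\dots,S_m$ by sets that pairwise intersect only in $\{i\}$; whenever the original explanations containing $i$ overlap in more than $\{i\}$ (e.g.\ $\cal M_i(\vec x,f)=\{\{i,a,b\},\{i,a,c\}\}$), the disjointified collection is not a permutation image of the original one, so no axiom transfers the value $\sum_j 1/k_j$ computed in the surrogate instance back to $(\vec x,f)$. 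You flag ``checking that the $S_j$ still form an antichain so that $\cal M_i$ is preserved'' as the obstacle, but antichain-ness is not the problem: $\cal M_i$ is simply not preserved as a set system, and nothing in the axioms says $\beta_i$ depends only on the multiset of sizes $\{|S_1|,\dots,|S_m|\}$ --- that is essentially the conclusion you are trying to reach. (The internal computation in the surrogate instance is fine: each non-$i$ feature there has $\cal M_u=\{S_j'\}$ literally equal to the $\cal M_u$ of a one-explanation instance, so that transfer is licensed.)

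The paper closes this hole by inducting on $|\cal M(\vec x,f)|$ instead. For a feature $i$ outside the common intersection $\bigcap_j S_j$, it passes to an instance $(\vec y,g)$ with $\cal M(\vec y,g)=\cal M_i(\vec x,f)$ \emph{verbatim} --- the same sets, but strictly fewer explanations --- applies the inductive hypothesis there, and transfers back via the equality clause, which now genuinely applies. Features lying in every explanation are handled separately: a transposition of two such features fixes each $S_j$ setwise, so Symmetry makes them interchangeable, and $|\cal M(\vec x,f)|$-Efficiency then pins down their common value. Your argument would be repaired by adopting this induction, or by otherwise proving (rather than assuming) that $\beta_i$ depends only on the sizes of the sets in $\cal M_i(\vec x,f)$.
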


\subsection{The Responsibility Index}\label{subsec:responsibility}
We now present our third and final aggregator, the Responsibility index, named after the degree of responsibility \citep{chockler2004responsibility} used to measure causal strength.

The Responsibility index (denoted by $\rho$) of a feature is the inverse of the size of the smallest abductive explanation containing that feature. 
More formally,   
\begin{align}
    \rho_i(\vec x, f) = 
    \begin{cases}
        \max_{S \in \cal M_i(\vec x, f)} \frac1{|S|} & \cal M_i(\vec x, f) \ne \emptyset \\
        0 & \cal M_i(\vec x, f) = \emptyset
    \end{cases}
 \label{eq:responsibility}
\end{align}

To characterize this aggregator, we require different versions of Monotonicity and Efficiency. Our new monotonicity property requires aggregators to provide a higher score to features present in smaller abductive explanations. 
We refer to this property as Minimum Size Monotonicity: this corresponds to $\alpha$-Monotonicity where given a set of abductive explanations $\cal S$, we let $\alpha(\cal S) = - \min_{S \in \cal S} |S|$.

The new efficiency property does not fit into the $C$-Efficiency framework used so far and is easier to define as two new properties --- Unit Efficiency and Contraction. Unit Efficiency requires that the score given to any feature present in a singleton abductive explanation be $1$. This property is used to upper bound the score given to a feature.

\noindent \textbf{Unit Efficiency: } For any datapoint-model pair $(\vec x, f)$, $\cal M_i(\vec x, f) = \{\{i\}\}$ implies $\rho_i(\vec x, f) = 1$.

To define the contraction property, we define the \emph{contraction operation} on the set of features $N$: we replace a subset of features $T\subseteq N$ by a single feature $[T]$ corresponding to the set. 
The \emph{contracted data point} $\vec x^{[T]}$ is the same point as $\vec x$, but we treat all the features in $T$ as a single feature $[T]$. 
The contraction property requires that a contracted feature $[T]$ does not receive a score greater than the sum of the scores given to the individual features in $T$.

\textbf{Contraction:} For any subset $T$ that does not contain a null feature (i.e., a feature not included in any abductive explanation), we have $\rho_{[T]}(\vec x^{[T]}, f) \le \sum_{i \in T} \rho_i(\vec x, f)$. 
Moreover, equality holds if $T \in \{S : S \in \argmin_{S' \in \cal M_i(\vec x, f)}|S'|\}$ for all $i \in T$. In other words, equality holds iff $T$ is the smallest abductive explanation for all the features in $T$. 

The contraction property bounds the gain one gets by combining features and ensures that the total attribution that a set of features receives when combined does not exceed the sum of the individual attributions of each element in the set.

We are now ready to present our characterization of the Responsibility index.

\begin{restatable}{theorem}{thmresponsibilityindexaxioms}\label{thm: responsibility-index-axioms}
The Responsibility index is the only aggregator which satisfies Minimum Size Monotonicity, Unit Efficiency, Contraction, Symmetry, and Null Feature.
\end{restatable}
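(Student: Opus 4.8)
The plan is to prove the two directions separately: first that the Responsibility index $\rho$ satisfies all five properties (existence), and then that any aggregator satisfying them must coincide with $\rho$ (uniqueness). Throughout, write $m_i(\vec x, f) := \min_{S \in \cal M_i(\vec x, f)}|S|$ for a non-null feature $i$, so that $\rho_i(\vec x, f) = 1/m_i(\vec x, f)$.

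For existence, the only non-routine check is Contraction; Null Feature, Symmetry, Unit Efficiency, and Minimum Size Monotonicity all follow immediately from the closed form, since $1/m_i$ is a feature-independent, non-increasing function of $m_i$. For the Contraction inequality, let $R = \{[T]\} \cup W$ with $W \subseteq N \setminus T$ be a smallest abductive explanation containing $[T]$ in the contracted instance, so $\rho_{[T]}(\vec x^{[T]}, f) = 1/(1+|W|)$. The first key step is to observe that $T \cup W$ is sufficient in the original instance while $W$ alone is not (otherwise $[T]$ could be dropped from $R$, contradicting minimality); hence a subset-minimal sufficient $S^* \subseteq T \cup W$ must meet $T$, i.e.\ $|S^* \cap T| \ge 1$. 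Since $S^*$ is then an abductive explanation containing each $i \in S^* \cap T$, we get $m_i(\vec x, f) \le |S^*|$, and a short averaging computation gives $\sum_{i \in T} 1/m_i(\vec x, f) \ge |S^* \cap T|/|S^*| \ge |S^*\cap T|/(|S^* \cap T| + |W|) \ge 1/(1+|W|)$, where the last step uses $|S^* \cap T| \ge 1$. For the equality clause, when $T$ is itself a smallest abductive explanation of every $i \in T$ we have each $m_i(\vec x, f) = |T|$, so the right-hand side equals $1$, while in the contracted instance $\{[T]\}$ is an abductive explanation of size $1$, giving $\rho_{[T]}(\vec x^{[T]}, f) = 1$ as well.

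For uniqueness, let $\beta$ satisfy all five properties. The crucial observation is that Minimum Size Monotonicity is $\alpha$-monotonicity for $\alpha(\cal S) = -\min_{S \in \cal S}|S|$, whose codomain is \emph{totally} ordered; hence for a fixed feature $i$ the score $\beta_i(\vec x, f)$ depends only on $m_i(\vec x, f)$ and is non-increasing in it. Combining this with Symmetry (which removes any dependence on the feature's index, via a transposition) yields a single non-increasing function $h$ with $\beta_i(\vec x, f) = h(m_i(\vec x, f))$ for every non-null $i$, while Null Feature fixes $\beta_i = 0$ when $\cal M_i(\vec x, f) = \emptyset$. It remains to show $h(k) = 1/k$. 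Unit Efficiency gives $h(1) = 1$ directly. For general $k$, use a single-explanation instance — e.g.\ the $k$-bit conjunction $f(\vec y) = y_1 \wedge \dots \wedge y_k$ at the all-ones point, whose unique abductive explanation is $T := \{1, \dots, k\}$ — so that $m_i(\vec x, f) = k$ and $\beta_i(\vec x, f) = h(k)$ for each $i \in T$. Since $T$ is the smallest (indeed only) abductive explanation of each of its features, the equality clause of Contraction applies; and in the contracted instance $\cal M_{[T]}(\vec x^{[T]}, f) = \{\{[T]\}\}$, so Unit Efficiency forces $\beta_{[T]}(\vec x^{[T]}, f) = 1$. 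Thus $1 = \beta_{[T]}(\vec x^{[T]}, f) = \sum_{i \in T}\beta_i(\vec x, f) = k\,h(k)$, giving $h(k) = 1/k$ and hence $\beta = \rho$.

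The main obstacle is the Contraction direction, which drives both the soundness check and the final step of uniqueness: one must correctly describe how the abductive explanations of the contracted instance $\vec x^{[T]}$ relate to those of $\vec x$ — in particular that a smallest explanation containing $[T]$ ``unfolds'' to a sufficient set meeting $T$, and that $\cal M_{[T]}(\vec x^{[T]}, f)$ collapses to exactly $\{\{[T]\}\}$ on the single-explanation instance (the precise form Unit Efficiency needs). Making the averaging inequality land on $1/(1+|W|)$ rather than a weaker bound is the delicate quantitative point, and it is exactly what makes Contraction — rather than a naive sub-additivity axiom — the right property to pin down $h$.
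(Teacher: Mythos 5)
Your proposal is correct and follows essentially the same route as the paper: the Contraction inequality is established by unfolding a smallest contracted explanation into an original explanation meeting $T$ and running the same averaging bound (the paper's $\frac{1}{k-|T'|+1}\le\frac{|T'|}{k}$ is your $|S^*\cap T|/|S^*|\ge 1/(1+|W|)$), and uniqueness is pinned down by reducing every feature to a single-explanation instance via Minimum Size Monotonicity and then applying the equality clause of Contraction, Unit Efficiency, and Symmetry to force the value $1/k$. Your packaging of uniqueness as ``the score is a single non-increasing function $h$ of $m_i$'' rather than the paper's induction on $|\cal M(\vec x, f)|$ is only a cosmetic difference, since the paper's inductive step likewise only ever invokes the single-explanation base case.
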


 \subsection{Impossibilities}
The framework discussed above can be used to axiomatically characterize several indices. 
Our axiomatic approach also offers insights as to what \emph{can} be accomplished by aggregating abductive explanations. We prove that some choices of $\alpha$ and $C$ may create a set of properties that are impossible to satisfy simultaneously. For example, the Shapley value's efficiency property stipulates that all Shapley values must sum to $1$. Somewhat surprisingly, this is not possible when taking an abductive explanation approach.

\begin{restatable}{prop}{propdeeganpackelimpossibility}\label{prop:deegan-packel-impossibility}
There exists no aggregator satisfying Minimal Monotonicity, Symmetry, Null Feature, and 1-Efficiency.
\end{restatable}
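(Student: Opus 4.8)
The plan is to exploit the rigidity hidden in Minimal Monotonicity. Its equality clause states that whenever a feature $i$ has $\cal M_i(\vec x, f) = \cal M_i(\vec y, g)$, then $\beta_i(\vec x, f) = \beta_i(\vec y, g)$; in other words, $\beta_i$ is a function of $\cal M_i$ \emph{alone}. I would use this to clash with $1$-Efficiency: I will build two model--point pairs in which one fixed feature has an identical set of abductive explanations (so its score is forced to be the same by monotonicity), yet the global normalization imposed by $1$-Efficiency forces that single feature to take two different values.

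Concretely, I would work over binary features with point of interest $\vec x = (1, 1, \dots, 1)$ and use two hand-built models: $f^P(\vec z) = z_1$ and $f^Q(\vec z) = z_1 \vee z_2$. A short check shows $\cal M(\vec x, f^P) = \{\{1\}\}$ and $\cal M(\vec x, f^Q) = \{\{1\}, \{2\}\}$ (fixing $z_1 = 1$ already forces the output of $f^P$, and under $f^Q$ both $\{1\}$ and $\{2\}$ are minimal sufficient sets). The crucial coincidence is that feature $1$ sees the same explanations in both instances: $\cal M_1(\vec x, f^P) = \{\{1\}\} = \cal M_1(\vec x, f^Q)$.

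Next I would pin down the forced scores. Under $f^P$ every feature other than $1$ lies in no abductive explanation, so Null Feature gives them score $0$, and $1$-Efficiency then forces $\beta_1(\vec x, f^P) = 1$. Under $f^Q$ features $3, \dots, n$ are null, while the transposition $\pi = (1\,2)$ fixes both $\vec x$ and $f^Q$; Symmetry therefore yields $\beta_1(\vec x, f^Q) = \beta_2(\vec x, f^Q)$, and with $1$-Efficiency this gives $\beta_1(\vec x, f^Q) = \tfrac12$. But the equality clause of Minimal Monotonicity, applied to feature $1$ with $\cal M_1(\vec x, f^P) = \cal M_1(\vec x, f^Q)$, forces $\beta_1(\vec x, f^P) = \beta_1(\vec x, f^Q)$, i.e.\ $1 = \tfrac12$ --- the desired contradiction.

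The arithmetic here is trivial, so the real work is in two bookkeeping steps. First, I must confirm that the two hand-built functions genuinely realize the claimed antichains of abductive explanations; the clean way to guarantee this in general is to observe that any clutter $\cal C$ is realized at $\vec x = \mathbf 1$ by the monotone DNF $\bigvee_{C \in \cal C} \bigwedge_{j \in C} z_j$, whose minimal sufficient sets are exactly $\cal C$. Second, I must discharge the Symmetry axiom carefully, verifying that $\pi^{-1} f^Q = f^Q$ and $\pi \vec x = \vec x$ for $\pi = (1\,2)$ so that $\beta_1$ and $\beta_2$ are forced to coincide on $f^Q$. I expect the only conceptual subtlety --- and the step worth stating up front --- to be the realization that the equality half of Minimal Monotonicity turns each $\beta_i$ into a function of $\cal M_i$ in isolation, which is exactly what a fixed global normalization like $1$-Efficiency cannot tolerate.
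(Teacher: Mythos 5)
Your proof is correct and follows essentially the same strategy as the paper's: build two datapoint--model pairs, pin down scores via Null Feature, Symmetry, and $1$-Efficiency, and use the equality clause of Minimal Monotonicity to transfer a score between instances and force a contradiction. The only (cosmetic) difference is that the paper uses $\cal M = \{\{1,2\}\}$ versus $\{\{1,2\},\{3,4\}\}$ and lands the contradiction on the total sum being $2$, whereas you use singleton explanations and land it on a single feature being forced to equal both $1$ and $\tfrac12$; your explicit monotone-DNF realization of the required clutters is a nice touch the paper leaves implicit.
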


All the indices described in this section inherit the precision and robustness of abductive explanations while simultaneously satisfying a set of desirable properties. In what follows, we demonstrate the value of this robustness empirically.

% \section{Enumerating Abductive Explanations}
% %
% Enumerating all abductive explanations for a given data point 
% to be explained is intractable in theory, due to the exponential 
% number of explanations in the worst-case for most of the classification models. Fortunately in practice, the number of explanations is often not large and listing the complete set of explanations can be achieved in a short/practical time.
% %
% The most effective approach to enumerate abductive explanations 
% is the MARCO algorithm~\cite{LiffitonPMM16} that exploits the hitting set duality between abductive and contrastive (also referred as counterfactuals)\footnote{Contrastive explanations broadly provide what changes should be made in the input data to flip the prediction.} explanations~\cite{IgnatievNA020}. 
% %
% Intuitively, the algorithm iteratively calls a SAT oracle to pick a candidate set of features for either finding one abductive or one contrastive explanation. The resulting explanation is then used to block future assignments in the SAT formula from repeating identified in the next iterations. 
% %
% Besides, to overcome the limitation of the enumeration problem, one can consider a partial enumeration of abductive explanations and thus compute an approximate ranking of the feature importance explanations.
% %For our experimental evaluations, we implement the MARCO approach for XGboost tree. 

%% compas LIME
\begin{table*}[t]
    \centering
  \begin{tabular}{p{1.0cm} p{0.6cm}p{0.6cm}p{0.6cm}p{0.6cm}p{0.6cm}p{0.6cm}p{0.6cm}p{0.6cm}p{0.6cm}p{0.6cm}p{0.6cm}p{0.6cm}p{0.6cm}}
    \toprule
    \multirow{2}{*}{Features} &
      \multicolumn{3}{c}{Lime (\%)} &
      \multicolumn{3}{c}{Responsibility (\%)} &
      \multicolumn{3}{c}{Holler-Packel (\%)} &
      \multicolumn{3}{c}{Deegan-Packel (\%)} \\
      & {1st} & {2nd} & {3rd}  & {1st} & {2nd} & {3rd}  & {1st} & {2nd} & {3rd}  & {1st} & {2nd} & {3rd}   \\
      \midrule
 Race  & 0.0 & 0.0 & 0.0 & 0.921 & 0.079 & 0.0 & 0.845 & 0.148 & 0.007 & 0.845 & 0.148 & 0.007 \\ 
UC1  & 0.492 & 0.508 & 0.0 & 0.601 & 0.399 & 0.0 & 0.157 & 0.843 & 0.0 & 0.157 & 0.843 & 0.0 \\ 
UC2  & 0.508 & 0.492 & 0.0 & 0.601 & 0.399 & 0.0 & 0.157 & 0.843 & 0.0 & 0.157 & 0.843 & 0.0 \\ 
    \bottomrule
  \end{tabular}
  %\vspace{0.2cm}
   \caption{This table shows the results of the LIME attack experiment on the Compas dataset. Each row represents the frequency of occurrence of either a sensitive feature (\emph{Race}) or an uncorrelated feature (\emph{UC1},\emph{UC2}) in the top 3 positions when ranked based on their LIME scores, Responsibility indices, Holler-Packel indices, and Deegan-Packel indices.
LIME explanations do not uncover the underlying biases of the attack model, whereas the Responsibility index, Deegan-Packel index, and Holler-Packel index successfully uncover the underlying biases of the attack model in the explanations they generate.
}
    \label{tab:lime_compas2}   
\end{table*}

% compas SHAP
\begin{table*}[t]
    \centering
  \begin{tabular}{p{1.0cm} p{0.6cm}p{0.6cm}p{0.6cm}p{0.6cm}p{0.6cm}p{0.6cm}p{0.6cm}p{0.6cm}p{0.6cm}p{0.6cm}p{0.6cm}p{0.6cm}p{0.6cm}}
    \toprule
    \multirow{2}{*}{Features} &
      \multicolumn{3}{c}{SHAP (\%)} &
      \multicolumn{3}{c}{Responsibility (\%)} &
      \multicolumn{3}{c}{Holler-Packel (\%)} &
      \multicolumn{3}{c}{Deegan-Packel (\%)} \\
      & {1st} & {2nd} & {3rd}  & {1st} & {2nd} & {3rd}  & {1st} & {2nd} & {3rd}  & {1st} & {2nd} & {3rd}   \\
      \midrule
   Race  & 0.416 & 0.238 & 0.141 & 0.946 & 0.044 & 0.01 & 0.867 & 0.036 & 0.052 & 0.867 & 0.039 & 0.057 \\ 
UC1  & 0.252 & 0.249 & 0.172 & 0.608 & 0.316 & 0.067 & 0.146 & 0.47 & 0.215 & 0.146 & 0.552 & 0.138 \\ 
UC2  & 0.215 & 0.249 & 0.304 & 0.618 & 0.297 & 0.08 & 0.148 & 0.466 & 0.213 & 0.148 & 0.554 & 0.133 \\ 
    \bottomrule
  \end{tabular}
   % \vspace{0.2cm}
   \caption{This table shows the results of the SHAP attack experiment on the Compas dataset. Each row represents the frequency of occurrence of either a sensitive feature (\emph{Race}) or an uncorrelated feature (\emph{UC1},\emph{UC2}) in the top 3 positions when ranked based on their SHAP scores, Responsibility indices, Holler-Packel indices, and Deegan-Packel indices.
   }
    \label{tab:shap_compas2}   
\end{table*}

\begin{table*}[t]
    \centering
  \begin{tabular}{p{1.0cm} p{0.6cm}p{0.6cm}p{0.6cm}p{0.6cm}p{0.6cm}p{0.6cm}p{0.6cm}p{0.6cm}p{0.6cm}p{0.6cm}p{0.6cm}p{0.6cm}p{0.6cm}}
    \toprule
    \multirow{2}{*}{Features} &
      \multicolumn{3}{c}{Lime (\%)} &
      \multicolumn{3}{c}{Responsibility (\%)} &
      \multicolumn{3}{c}{Holler-Packel (\%)} &
      \multicolumn{3}{c}{Deegan-Packel (\%)} \\
      & {1st} & {2nd} & {3rd}  & {1st} & {2nd} & {3rd}  & {1st} & {2nd} & {3rd}  & {1st} & {2nd} & {3rd}   \\
      \midrule
  Gender  & 0.0 & 1.0 & 0.0 & 1.0 & 0.0 & 0.0 & 1.0 & 0.0 & 0.0 & 1.0 & 0.0 & 0.0 \\ 
LR  & 1.0 & 0.0 & 0.0 & 0.46 & 0.54 & 0.0 & 0.0 & 0.69 & 0.31 & 0.0 & 0.72 & 0.28 \\ 
    \bottomrule
  \end{tabular}
  %\vspace{0.2cm}
   \caption{This table shows the results of the LIME attack experiment on the German Credit dataset. Each row represents the frequency of occurrence of either a sensitive feature (\emph{Gender}) or an uncorrelated feature (\emph{LoanRateAsPercentOfIncome}) in the top 3 positions when ranked based on their LIME scores, Responsibility indices, Holler-Packel indices, and Deegan-Packel indices.
   }
    \label{tab:lime_german1}   
\end{table*}

%% german SHAP
\begin{table*}[t]
    \centering
  \begin{tabular}{p{1.0cm} p{0.6cm}p{0.6cm}p{0.6cm}p{0.6cm}p{0.6cm}p{0.6cm}p{0.6cm}p{0.6cm}p{0.6cm}p{0.6cm}p{0.6cm}p{0.6cm}p{0.6cm}}
    \toprule
    \multirow{2}{*}{Features} &
      \multicolumn{3}{c}{SHAP (\%)} &
      \multicolumn{3}{c}{Responsibility (\%)} &
      \multicolumn{3}{c}{Holler-Packel (\%)} &
      \multicolumn{3}{c}{Deegan-Packel (\%)} \\
      & {1st} & {2nd} & {3rd}  & {1st} & {2nd} & {3rd}  & {1st} & {2nd} & {3rd}  & {1st} & {2nd} & {3rd}   \\
      \midrule
Gender  & 0.0 & 0.41 & 0.01 & 0.93 & 0.04 & 0.03 & 0.87 & 0.07 & 0.02 & 0.87 & 0.07 & 0.02 \\ 
LR  & 0.96 & 0.0 & 0.04 & 0.55 & 0.44 & 0.01 & 0.17 & 0.81 & 0.02 & 0.17 & 0.82 & 0.0 \\ 
    \bottomrule
  \end{tabular}
  %\vspace{0.2cm}
   \caption{This table shows the results of the SHAP attack experiment on the German Credit dataset. Each row represents the frequency of occurrence of either a sensitive feature (\emph{Gender}) or an uncorrelated feature (\emph{LoanRateAsPercentOfIncome}) in the top 3 positions when ranked based on their SHAP scores, Responsibility indices, Holler-Packel indices, and Deegan-Packel indices.
   }
    \label{tab:shap_german1}   
\end{table*}

\section{Empirical Evaluation}\label{sec:expts}
To showcase the robustness of the explanations generated by our methods, we study their empirical behavior against adversarial attacks proposed by \citet{Slack2020Shap}. 
Specifically, we investigate if our framework successfully uncovers underlying biases in adversarial classifiers that popular explanation methods like LIME and SHAP often fail to identify \citep{Slack2020Shap}. 
 % Second, we compare the necessity and sufficiency scores of the features computed from the explanations generated by AXE with that of the baseline frameworks.
% Next, we analyze specific data points where the explanations generated by our framework significantly vary from the explanations generated by the LIME and SHAP frameworks.
We describe the details of the datasets used in our experiments below.
\begin{description}[leftmargin=0cm]
    \item[Compas \citep{angwin2016bias}:] This dataset contains information about the demographics, criminal records, and Compas risk scores of 6172 individual defendants from Broward County, Florida. 
    Individuals are labeled with either a `high' or `low' risk score, with race as the sensitive feature.
    % \item[Communities and Crime \citep{Dua2019UCI}:] This dataset contains information on various socioeconomic and law enforcement aspects of different USA communities in 1994.
    % The label for each community is a Boolean value which is true if the proportion of violent crimes is above the median.
    % The sensitive feature is the percentage of Caucasians in the community.
    \item[German Credit \citep{Dua2019UCI}:] This dataset contains financial and demographic information on 1000 loan applicants.
    Each candidate is labeled as either a good or bad loan candidate. The sensitive feature is gender.
\end{description}

\subsection{Attack Model}
\begin{figure}[t]
    \centering
    \begin{tikzpicture}[->,rect/.style={rectangle, rounded corners, minimum width=2cm, minimum height=1cm,text centered, draw=black}, mycirc2/.style={circle,fill=white,minimum size=0.75cm,inner sep = 3pt},edge/.style = {->,> = stealth'}]
    \node[mycirc2] (poi) {$\vec x$};
    \node[rect, right=0.5cm of poi] (ood) {OOD Classifier};
    \node[rect, above=1cm of ood, xshift=2cm] (unb) {Unbiased Classifier};
    \node[rect, below=1cm of ood, xshift=2cm] (b) {Biased Classifier};
    \node[mycirc2,right=0.5cm of unb] (ounb) {Output};
    \node[mycirc2,right=0.5cm of b] (ob){Output};
    \draw[edge] (poi) -- (ood);
    \draw[edge] (ood) -- (unb) node[midway, left=5pt] {If $\vec x$ is OOD};
    \draw[edge] (ood) -- (b) node[midway, left=5pt] {If $\vec x$ is \emph{not} OOD};
    \draw[edge] (b) -- (ob);
    \draw[edge] (unb) -- (ounb);
    \end{tikzpicture}
    \caption{A pictorial description of the attack model. OOD is short for out-of-distribution.}
    \label{fig:attack-model}
\end{figure}
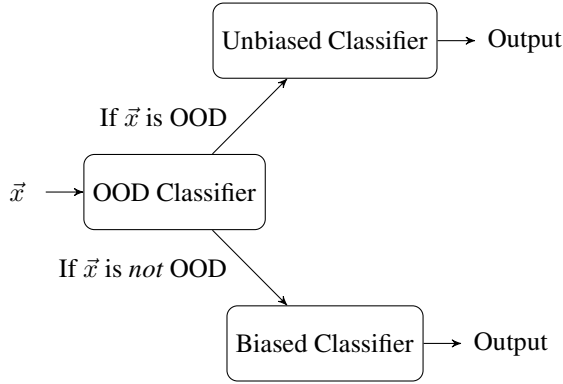
We evaluate the robustness of our explanation methods using the adversarial attacks proposed by \citet{Slack2020Shap} for LIME and SHAP. 
The underlying attack model is a two-level adversarial classifier in both adversarial attacks. The first level of the adversarial classifier is an out-of-distribution (OOD) classifier that predicts if a point is OOD or not. The second level of the adversarial classifier consists of a biased and unbiased prediction model, both of which predict the target label for a given data point. The biased prediction model makes predictions solely based on the sensitive feature in the dataset. In contrast, the unbiased prediction model makes predictions using features that are uncorrelated with the sensitive feature.

With the Compas dataset, the biased model uses the \emph{Race} feature for predicting the target label. In contrast, the unbiased model uses either one or two uncorrelated synthetic features (\emph{UC1, UC2}) for predicting the label. When two uncorrelated features are used, the label is their values' \emph{XOR}.
Similarly, with the German Credit dataset, the unbiased and biased models use the \emph{LoanRateAsPercentOfIncome} feature and \emph{Gender} feature for predicting the target label, respectively.

For a given data point, the adversarial classifier first uses the out-of-distribution (OOD) classifier to predict whether the given data point is out-of-distribution. If the given data point is out of distribution, the adversarial classifier uses the unbiased prediction model to predict the target label; else, the adversarial classifier uses the biased classifier to predict the target label (see Figure \ref{fig:attack-model}).
Most points in the dataset are classified as in-distribution and therefore, the prediction of the attack model for these points will be made solely using the sensitive feature of the dataset.
Since the type of explanations generated by popular methods like LIME and SHAP  tend to be heavily influenced by the predictions of the classifier model on out-of-distribution data points, this attack is designed to hide the underlying biases of the model by ensuring the bias is only applied to in-distribution data points. 
For each method (SHAP and LIME), \citet{Slack2020Shap} use a different attack model with the same high level approach described above.
We relegate the specific implementation details of each attack model to Appendix \ref{app:experiments}.

\subsection{Experimental Setup}
% We describe the main details of the experimental setup for our experiments. For additional details, please see \cref{app:experiments}. 

We split a given dataset into train and test datasets in all our experiments. 
We use the training dataset to train out-of-distribution (OOD) classifiers for the LIME and SHAP attacks and the test dataset to evaluate our methods' robustness.
To generate explanations using our proposed abductive explanation aggregators, we must first compute the set of all abductive explanations for the adversarial classifier model. We do this using the MARCO algorithm \citep{LiffitonPMM16}. 
% from the publicly available XReason Python package \citep{silva2022treeensembles, inms-corr19, inms-rcra20}; the exact implementation details are relegated to Appendix \ref{app:experiments}.
After generating the complete set of abductive explanations for the adversarial classifier, we compute the feature importance scores using each of our methods --- the Holler-Packel index, Deegan-Packel index, and the Responsibility index. We use these feature importance scores as explanations for each point in the test dataset.

We compare our methods with LIME and SHAP, computed using their respective publicly available libraries \citep{Lundberg2017Unified, Ribeiro2016Lime}. Code for reproducing the results can be found at  \href{https://shorturl.at/tJT09}{https://shorturl.at/tJT09}.

\subsection{Evaluating Robustness to Adversarial LIME and SHAP attacks}

 For each data point in the test dataset, we rank features based on the feature importance scores given by each explanation method. Note that we allow multiple features to hold the same rank if they have the same importance scores.  
 For each explanation method, we compute the fraction of data points in which the sensitive and uncorrelated features appear in the top three positions.  
 Since most of the points in the test dataset are `in-distribution' and classified as such by the OOD classifier, any good explanation method should identify that the adversarial classifier makes its prediction largely based on the sensitive feature for most of the points in the test dataset. 
 In other words, the sensitive feature should receive a high importance score.

\Cref{tab:lime_compas2} 
shows the percentage of data points for which the sensitive attribute (i.e., \emph{Race}) and the uncorrelated features (\emph{UC1} and \emph{UC2}) appear in the top three positions when features are ranked using LIME and our methods in the LIME attack experiment on the Compas dataset. While \Cref{tab:lime_compas2} presents results when two uncorrelated synthetic features (\emph{UC1, UC2}) are used in the unbiased model of the adversarial classifier, \Cref{tab:lime_compas1} in Appendix \ref{app:experiments} presents results when a single uncorrelated feature is used in the unbiased model of the adversarial classifier.

Similarly, \Cref{tab:shap_compas2} shows the percentage of data points for which the sensitive attribute (i.e., \emph{Race}) and the uncorrelated features (\emph{UC1} and \emph{UC2}) appear in the top three positions when features are ranked using SHAP and our methods in the SHAP attack experiment for the Compas dataset. Again, \Cref{tab:shap_compas2} presents results when two uncorrelated features are used in the unbiased model of the adversarial classifier and \Cref{tab:shap_compas1} in Appendix \ref{app:experiments} presents results when a single uncorrelated feature is used in the unbiased model of the adversarial classifier.
%

%% german LIME

Since the biased classifier is used to predict the label for almost all the test points, we expect the explanations to assign a high feature importance score to the sensitive feature. However, we observe that in the LIME attack experiment, LIME does not always assign high scores to the sensitive feature --- \emph{Race} --- due to which \emph{Race} does not at all appear in the top three positions when two uncorrelated features are used. The uncorrelated features are incorrectly ranked higher than the sensitive feature. 
On the other hand, the Responsibility index, the Holler-Packel index, and the Deegan-Packel index assign the highest feature importance scores to \emph{Race}: \emph{Race} appears in the top position for the majority of the instances ($>84\%$). 
It is important to note that the instances in which our explanation methods do not assign a high importance score to the \emph{Race} feature are the instances where the OOD classifier classifies test dataset instances as out-of-distribution instances. 
We observe a similar pattern to LIME in the SHAP attack experiment. In this experiment, abductive explanation aggregators rank {\em Race} as the most important feature in at least $86\%$ of test data, whereas SHAP ranks \emph{Race} as the most important feature only for $41.6 \%$ of the returned explanations.
%and $41.6 \%$ of the time when two uncorrelated features are used.

We see similar results with the German Credit dataset %(\Cref{tab:lime_german1} and \Cref{tab:shap_german1} in Appendix \ref{app:experiments}). 
reported in \Cref{tab:lime_german1} and \Cref{tab:shap_german1}.
In both LIME and SHAP attacks, we observe that the \emph{LoanRateAsPercentOfIncome} feature appears in the top position for most of the delivered explanations. However, the sensitive feature --- \emph{Gender} --- does not appear in the top position in any instance. 

In contrast, the Responsibility Index, the Holler-Packel Index, and the Deegan-Packel Index correctly assign the highest feature importance score to the sensitive feature --- \emph{Gender} --- for most of the data points; \emph{Gender} appears in the top position in $>87\%$ of the instances in both the LIME and SHAP attack experiments.
Clearly, we can conclude that our abductive explanation aggregators generate more robust and reliable explanations to adversarial attacks than LIME and SHAP.

\section{Conclusion and Future Work}
In this work, we aggregate abductive explanations into feature importance scores. 
We present three methods that aggregate abductive explanations, showing that each of them uniquely satisfies a set of desirable properties. 
We also empirically evaluate each of our methods, showing that they are robust to attacks that SHAP and LIME are vulnerable to.

At a higher level, our work combines satisfiability theory and cooperative game theory to explain the decisions of machine learning models. We do so using the well-studied concept of abductive explanations. However, our framework can potentially be extended to other explanation concepts from satisfiability theory as well, such as {\em contrastive explanations} \citep{IgnatievNA020} and {\em probabilistic abductive explanations} \citep{izza-ijar23}; this is an important area for future work. 

Our focus in this paper has been the axiomatic characterization and comparison of different measures. We believe an empirical comparison of the three methods we propose is also worth exploring in future work. This study is likely to yield insights into the differences in applicability of each of our three methods, further leading to a deeper understanding into how abductive explanations should be aggregated.

\paragraph{Acknowledgments.}
This research supported in part by the National Research Foundation, Prime Minister’s Office, Singapore under its Campus for Research Excellence and Technological Enterprise (CREATE) program.

\bibliographystyle{plainnat}
\bibliography{abb,explanations}

\newpage
\appendix

\section{Missing Proofs from Section \ref{sec:framework}}

\thmhollerpackelaxioms*
\begin{proof}
It is easy to see that the Holler-Packel index satisfies these properties so we go ahead and show uniqueness.

Let us denote the aggregator that satisfies these properties by $\gamma$ and let us use $(\vec x, f)$ to denote a datapoint-model pair. We show uniqueness via induction on $|\cal M(\vec x, f)|$. When $|\cal M(\vec x, f)| = 1$, let the only abductive explanation be $T$. For all $i$ in $T$, by Symmetry and $(\sum_{i \in N} |\cal M_i(x, f)|)$-Efficiency, we get $\gamma_i(\vec x, f) = 1$. For all $i \in N \setminus T$, by Null Feature, we get $\gamma_i(\vec x, f) = 0$. This coincides with the Holler-Packel index.

Now assume $|\cal M(\vec x, f)| = m$ i.e. $\cal M(\vec x, f) = \{S_1, S_2, \dots S_m\}$ for some sets $S_1, S_2, \dots, S_m$. Let $S = \bigcap_{j \in [m]} S_j$ be the set of of features where are present in all abductive explanations. For any $i \notin S$, let $(\vec y, g)$ be a datapoint-model pair such that $\cal M(\vec y, g) = \cal M_i(\vec x, f)$. Such a datapoint-model pair trivially exists. Since $|\cal M(\vec y, g)| < |\cal M(\vec x, f)|$, we can apply the inductive hypothesis and $\gamma_i(\vec y, g)$ coincides with the Holler-Packel index. Therefore $\gamma_i(\vec y, g) = |\cal M_i(\vec y, g)|$. Using Minimal Monotonicity, we get that $\gamma_i(\vec x, f) = \gamma_i(\vec y, g) = |\cal M_i(\vec x, f)|$ as well. Equality holds since $\cal M_i(\vec y, g) = \cal M_i(\vec x, f)$. Therefore, we get $\gamma_i(\vec x, f) = |\cal M_i(\vec x, f)|$ which coincides with the Holler-Packel index.

For all $i \in S$, using Symmetry, they all have the same value and using $(\sum_{i \in N} |\cal M_i(x, f)|)$-Efficiency, this value is unique and since all the other features coincide with the Holler-Packel index and the Holler-Packel index satisfies these axioms, it must be the case that $\gamma_i(\vec x, f)$ coincides with the Holler-Packel index as well for all $i \in S$.
\end{proof}

\thmdeeganpackelaxioms*
\begin{proof}
This proof is very similar to that of the Holler-Packel index (Theorem \ref{thm:holler-packel-axioms}).
It is easy to see that the Deegan-Packel index satisfies these properties so we go ahead and show uniqueness.

Let us denote the aggregator that satisfies these properties by $\gamma$ and let us use $(\vec x, f)$ to denote a datapoint-model pair. We show uniqueness via induction on $|\cal M(\vec x, f)|$. When $|\cal M(\vec x, f)| = 1$, let the only abductive explanation be $T$. For all $i$ in $T$, by Symmetry and $|\cal M(\vec x, f)|$-Efficiency, we get $\gamma_i(\vec x, f) = \frac{1}{|T|}$. For all $i \in N \setminus T$, by Null Feature, we get $\gamma_i(\vec x, f) = 0$. This coincides with the Deegan-Packel index.

Now assume $|\cal M(\vec x, f)| = m$ i.e. $\cal M(\vec x, f) = \{S_1, S_2, \dots S_m\}$ for some sets $S_1, S_2, \dots, S_m$. Let $S = \bigcap_{j \in [m]} S_j$ be the set of of features where are present in all abductive explanations. For any $i \notin S$, let $(\vec y, g)$ be a datapoint-model pair such that $\cal M(\vec y, g) = \cal M_i(\vec x, f)$. Such a datapoint-model pair trivially exists. Since $|\cal M(\vec y, g)| < |\cal M(\vec x, f)|$, we can apply the inductive hypothesis and $\gamma_i(\vec y, g)$ coincides with the Deegan-Packel index. Therefore $\gamma_i(\vec y, g) = \sum_{S \in \cal M_i(\vec y, g)} \frac1{|S|}$. Using Minimal Monotonicity, we get that $\gamma_i(\vec x, f) = \gamma_i(\vec y, g) = \sum_{S \in \cal M_i(\vec y, g)} \frac1{|S|}$ as well. Equality holds since $\cal M_i(\vec y, g) = \cal M_i(\vec x, f)$. Therefore, we get $\gamma_i(\vec x, f) = = \sum_{S \in \cal M_i(\vec x, f)} \frac1{|S|}$ which coincides with the Deegan-Packel index.

For all $i \in S$, using Symmetry, they all have the same value and using $|\cal M(\vec x, f)|$-Efficiency, this value is unique and since all the other features coincide with the Deegan-Packel index and the Deegan-Packel index satisfies these axioms, it must be the case that $\gamma_i(\vec x, f)$ coincides with the Deegan-Packel index as well for all $i \in S$.
\end{proof}

\thmresponsibilityindexaxioms*
\begin{proof}
It is easy to see that the responsibility index satisfies Minimum Size Monotonicity, Unit Efficiency, Null Feature and Symmetry. We show using the following Lemma that the responsibility index satisfies Contraction.

\begin{restatable}{lemma}{lemresponsibilitycontraction}\label{lem:responsibility-contraction}
The responsibility index $\rho(\vec x, f)$ satsifies Contraction.
\end{restatable}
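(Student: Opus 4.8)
The plan is to translate a smallest abductive explanation of the \emph{contracted} instance back into an abductive explanation of the original instance and then compare sizes. Write $\rho_{[T]}(\vec x^{[T]}, f) = 1/k$, where $k = |S'|$ for a smallest abductive explanation $S'$ of the contracted instance that contains the merged feature $[T]$; say $S' = \{[T]\} \cup R$ with $R \subseteq N \setminus T$ and $|R| = k-1$. The first step is to observe that fixing $[T]$ is exactly the same as fixing every feature of $T$ to its value in $\vec x$, so sufficiency (in the sense of \eqref{eq:abductive-explanation}) of a set in the contracted instance is equivalent to sufficiency of the corresponding set in the original instance. In particular, $T \cup R$ is sufficient for $f$ at $\vec x$, and minimality of $S'$ yields two facts I will exploit: $R$ alone is not sufficient (we may not drop $[T]$), and $T \cup (R \setminus \{j\})$ is not sufficient for any $j \in R$ (we may not drop any $j$).

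Next I would pick any abductive explanation $A \subseteq T \cup R$ of the original instance, which exists since $T \cup R$ is sufficient. The core of the argument is a structural claim: $A$ must meet $T$ and contain all of $R$. Indeed, if $A \cap T = \emptyset$ then $A \subseteq R$ would force $R$ to be sufficient, and if some $j \in R$ were outside $A$ then $A \subseteq T \cup (R \setminus \{j\})$ would force $T \cup (R \setminus \{j\})$ to be sufficient; both contradict the minimality facts above. Hence $A = A' \cup R$ with $\emptyset \neq A' \subseteq T$ and $|A| = |A'| + k - 1$. Since $A \in \cal M_i(\vec x, f)$ for every $i \in A'$, the smallest such explanation has size at most $|A|$, giving $\rho_i(\vec x, f) \ge 1/(|A'| + k - 1)$ for each $i \in A'$.

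Summing this over $A'$ and discarding the nonnegative contributions of $T \setminus A'$ gives $\sum_{i \in T}\rho_i(\vec x, f) \ge |A'|/(|A'| + k - 1)$, so the desired inequality $1/k = \rho_{[T]}(\vec x^{[T]}, f) \le \sum_{i \in T}\rho_i(\vec x, f)$ reduces to the elementary rearrangement $(|A'| - 1)(k - 1) \ge 0$, which always holds. For the equality clause, suppose $T$ is a smallest abductive explanation containing each of its features; then $T$ is itself an abductive explanation, so $\rho_i(\vec x, f) = 1/|T|$ for all $i \in T$ and the right-hand side equals $1$. On the contracted side, fixing $[T]$ is sufficient and minimal (the empty set is insufficient because $T$ is a genuine explanation), so $\{[T]\}$ is a size-$1$ abductive explanation and $\rho_{[T]}(\vec x^{[T]}, f) = 1$, yielding equality.

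I expect the main obstacle to be the structural decomposition $A = A' \cup R$ with $A'$ nonempty: establishing $R \subseteq A$ in particular requires reading off the minimality of $S'$ correctly through the contraction, which in turn hinges on verifying that sufficiency in the two instances lines up feature-for-feature (the point where the contracted feature $[T]$ is interpreted as the joint domain $\bigtimes_{i \in T}\cal X_i$). The no-null-feature hypothesis on $T$ plays only a light role here, ensuring the summands $\rho_i(\vec x, f)$ are well defined and positive; once the decomposition is in hand, the remaining size comparison is purely arithmetic.
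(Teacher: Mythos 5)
Your proof is correct and follows essentially the same route as the paper's: lift the smallest contracted abductive explanation containing $[T]$ to an abductive explanation $A = A' \cup R$ of the original instance with $\emptyset \ne A' \subseteq T$, lower-bound $\rho_i(\vec x, f)$ by $1/|A|$ for each $i \in A'$, and close with the same elementary inequality (your $(|A'|-1)(k-1) \ge 0$ is the paper's $|T'|^2 - (k+1)|T'| + k \le 0$ after the change of variables $|T'| = |A'|$ and $k \mapsto |A'| + k - 1$), with the same treatment of the equality clause. The only difference is that you explicitly derive the decomposition $A = A' \cup R$ from the minimality of $S'$, a step the paper attributes to ``the definition of a contraction'' without further argument.
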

\begin{proof}
For any set $T$ which does not contain a Null Feature, the responsibility index $\rho_{[T]}(\vec x^{[T]},f)$ is non-zero and corresponds to the inverse of the size of some set $S_T \in \cal M_{[T]}(\vec x^{[T]}, f)$. This implies that there must be some set $S$ in $\cal M(\vec x, f)$ which contains some non-empty subset $T' \subseteq T$ such that $S_T \setminus [T] = S \setminus T'$. This is obtained from the definition of a contraction. From the definition of responsibility index, we have $\rho_{[T]}(\vec x^{[T]}, f) = \frac1{k - |T'| + 1}$ where $k = |S|$.

We first show that the total responsibility index of the elements in $T'$ under $(\vec x, f)$ is weakly greater than the responsibility of $[T]$ under $(\vec x^{[T]}, f)$. 

Let $k$ be the size of $S$. Then, for all feasible $|T'|$ and $k$, we  have since $|T'| \in [1, k]$, the following inequality:
\begin{align}
    |T'|^2 - (k + 1)|T'| + k &\le 0 \notag \\
    \implies \frac{1}{k - |T'| + 1} &\le \frac{|T'|}{k} \notag \\
    \implies \rho_{[T]}(\vec x^{[T]}, f) &\le \sum_{i \in T'} \rho_i(\vec x, f) \label{eq:responsibility-tprime}
\end{align}

where \eqref{eq:responsibility-tprime} is true since the existence of $S$ gives us a lower bound of $\frac1k$ on the responsibility indices of all the elements in $T'$.
Since the responsibility index is always non-negative, from \eqref{eq:responsibility-tprime}, we have
\begin{align*}
\rho_{[T]}(\vec x^{[T]}, f) \le \sum_{i \in T} \rho_i(\vec x, f) 
\end{align*}
which is the first part of the Contraction property.

To show the second part, assume that $T' = T \in \cal M(\vec x, f)$ where none of the elements in $T$ are present in a smaller abductive explanation. They all have a responsibility of $1/|T|$. We have $\rho_{[T]}(\vec x^{[T]}, f) = 1$ since the set $\{[T]\} \in \cal M_{[T]}(\vec x^{[T]}, f)$. It is easy to see that this satisfies the equality condition in the Contraction Property.
\end{proof}
We now show uniqueness via induction on the size of $|\cal M(\vec x, f)|$. Let an arbitrary aggregator which satisfies the above properties be denoted by $\gamma(\vec x, f)$. When $|\cal M(\vec x, f)| = 1$, let $\cal M(\vec x, f) = \{T\}$. If $T= \{i\}$ for some $i \in N$, then by Unit Efficiency, $\gamma_i(\vec x, f) = 1$ and by Null Feature, $\gamma_{i'}(\vec x, f) = 0$ for $i' \ne i$. This coincides with the responsibility index.

When $|T| \ge 2$, using Contraction, we get $\gamma_{[T]}(\vec x^{[T]}, f) = \sum_{i \in T} \gamma_i(\vec x, f)$. Note that equality holds since $T$ is an abductive explanation and the smallest abductive explanation for all the elements in $T$. Using Unit Efficiency, we get $\gamma_{[T]}(\vec x^{[T]}, f) = 1$. Using symmetry, $\gamma_i(\vec x, f) = \gamma_j(\vec x, f)$ for all $i, j \in T$. Therefore $\gamma_i(\vec x, f) = \frac1{|T|}$ for all $i \in T$. For all $i \in N \setminus T$, $\gamma_i(\vec x, f) = 0$ because of the Null Feature property. This coincides with the responsibility index $\rho(\vec x, f)$ for all $i \in N$.

Now assume $|\cal M(\vec x, f)| = m$ i.e. $\cal M(\vec x, f) = \{S_1, S_2, \dots S_m\}$ for some sets $S_1, S_2, \dots, S_m$. 
Let $S$ be the set of of features which are present in at least one abductive explanation. 
For any $i \in N$, let $S_{i}$ be the smallest abductive explanation that $i$ is in (if there are multiple, we choose one arbitrarily). 
Let $(\vec y, g)$ be the datapoint-model pair such that $\cal M(\vec y, g) = S_i$. 
By Minimum Size Monotonicity, $\gamma_i(\vec x, f) = \gamma_i(\vec y, g)$. Note that equality holds since the smallest abductive explanations that contain $i$ have the same size in both $(\vec x, f)$ and $(\vec y, g)$. By the inductive hypothesis, $\gamma_i(\vec y, g)$ corresponds to the responsibility index for $i$ under $(\vec y, g)$. Therefore $\gamma_i(\vec x, f) = \gamma_i(\vec y, g) = 1/|S_i|$ for all $i \in S$. This coincides with the degree of responsibility, since $S_i$ is the smallest abductive explanation that contains $i$.

For all $i \notin S$, we have $\gamma_i(\vec x, f) = 0$ because of Null Feature and this coincides with the responsibility index as well.
\end{proof}

\propdeeganpackelimpossibility*
\begin{proof}
Consider a setting with 4 features $\{1, 2, 3, 4\}$. Assume for contradiction that there exists an aggregator $\gamma$ that satisfies these properties. Consider a datapoint-model pair $(\vec x, f)$ with $\cal M(\vec x, f) = \{\{1, 2\}\}$. Using Efficiency and Symmetry, we have that $\gamma_i(\vec x, f) = 1/2$ for all $i \in \{1, 2\}$. Now consider another datapoint-model pair $(\vec y, g)$ with $\cal M(\vec y, g) = \{\{1, 2\}, \{3, 4\}\}$. Then from Minimal Monotonicity, we have $\gamma_i(\vec y, g) = 1/2$ for all $i \in \{1, 2\}$. Similarly, $\gamma_j(\vec y, g) = 1/2$ for all $j \in \{3, 4\}$. However, this clearly violates efficiency since $\sum_{i \in N} \gamma_i(\vec y, g) = 2 \ne 1$. This is clearly a contradiction and therefore, such an aggregator cannot exist.
\end{proof}

\section{Algorithmic Loan Approval: an Example}\label{sec:example}
In this section, we discuss an example of algorithmic loan approval to show how the all the indices look like in practice. 
Consider a simple rule-based model $f$ trained on the features `Age', `Purpose', `Credit Score' and `Bank Balance'. The rule based-model has the following closed form expression:
\begin{align*}
    f(\vec x) = & (\text{Age} < 20 \land \text{Purpose} = \text{Education}) \\
    & \lor (\text{Age} > 30 \land \text{Purpose} = \text{Real Estate} \land \text{Credit > 700}) \\
    & \lor (\text{Credit} > 700 \land \text{Bank} > 300000) \\
    & \lor (\text{Age} > 25 \land \text{Bank} > 1000000)
\end{align*}
Let the point of interest $\vec x$ that we would like to explain be $(\text{Age} = 22, \text{Purpose} = \text{Real Estate}, \text{Credit} = 0, \text{Bank} = 50000)$.

Since $\vec x$ does not satisfy any of the rules, the model $f$ rejects the applicant; the abductive explanations of the outcome are 
$$\{(\text{Age}, \text{Credit}), (\text{Age}, \text{Bank}), (\text{Bank}, \text{Credit}, \text{Purpose})\}.$$
We compute the aggregators for all features, presented in Table \ref{tab:f-indices}. Table \ref{tab:f-indices} offers several interesting observations. 
All three indices have the same weak ordering over the set of features.
Age appears in two of three explanations, and all indices (weakly) rank Age as the most important feature;
however, the proportion of importance given to Age varies from index to index. 
On one hand, the responsibility index assigns Age the same importance as all other features (except for Purpose) as Age alone cannot change the outcome. 
On the other hand, the Deegan-Packel index assigns Age a strictly higher importance than all other features. 
We do not argue in favor of any index over another, but believe that they all provide useful insights about the output of $f$.
\begin{table}[h]
    \centering
    \begin{subtable}[h]{0.7\textwidth}
        \centering
        \begin{tabular}{|l | c | c | c| c|}
        \hline
        Index & Purpose & Age & Bank & Credit \\
        \hline
        Responsibility & 0.333 & 0.5 & 0.5 & 0.5\\
        Holler-Packel & 0.125 & 0.25 & 0.25 & 0.25\\
        Deegan-Packel & 0.042 & 0.125 & 0.104 & 0.104\\
        \hline
       \end{tabular}
       \caption{Index values explaining $f(\vec x)$}
       \label{tab:f-indices}
    \end{subtable}
    \hfill
    \begin{subtable}[h]{0.7\textwidth}
        \centering
        \begin{tabular}{|l | c | c | c| c|}
        \hline
        Index & Purpose & Age & Bank & Credit \\
        \hline
        Responsibility & 0 & 0.5 & 0.5 & 0.5\\
        Holler-Packel & 0 & 0.25 & 0.125 & 0.125\\
        Deegan-Packel & 0 & 0.125 & 0.062 & 0.062\\
        \hline
       \end{tabular}
       \caption{Index values explaining $g(\vec x)$}
       \label{tab:g-indices}
    \end{subtable}
     \caption{The explanations outputted for both $f(\vec x)$ (Table \ref{tab:f-indices}) and $g(\vec x)$ (Table \ref{tab:g-indices}), where $\vec x$ equals $(\text{Age} = 22, \text{Purpose} = \text{Real Estate}, \text{Credit} = 0, \text{Bank} = 50000)$. 
     }
     \label{tab:indices}
\end{table}

Another use of explanation indices is that they allow developers to compare different functions via the importance each feature has on the outcome. 
To show how this can be done, we create a new rule based function $g$ defined as follows:
\begin{align*}
    g(\vec x) = & (\text{Age} < 20 \land \text{Bank} > 25000) \\
    & \lor (\text{Bank} > 100000 \land \text{Credit} > 700) 
\end{align*}
The applicant $\vec x$ still does not satisfy any of the rules of $g$ and is rejected. 
However, the abductive explanations of $g(\vec x)$ --- $(\text{Age, Bank})$ and  $(\text{Age}, \text{Credit})$ --- are a subset of the abductive explanations of $f(\vec x)$. 
Ideally, the explanation indices should reflect this and assign features which are present in fewer causes less importance as compared to $f$. 
The indices explaining $g(\vec x)$ are presented in Table \ref{tab:g-indices}. 

The outputs are rather unsurprising. 
No index assigns a value to the Purpose since none of the abductive explanations contain it. 
However, even though the number of explanations containing Bank reduces, the responsibility index gives it the same amount of importance as $f$, while the other indices assigns it a lower importance than $f$. 

\section{On the Shapley Value}\label{apdx:shapley-value}
Recall that a cooperative game \citep{Wooldridge2011} is defined as a tuple $(N, v)$ where $N$ corresponds to a set of players and $v:2^N \rightarrow \R$ corresponds to the characteristic function of the game. $v(S)$ denotes the value of a set of players $S$; it can be thought of as the total money that the set of players $S$ will make if they work together. 

The Shapley Value \citep{Shapley1953OG, Young1985Shapley} assigns a score to each player in $N$ proportional to their importance in the cooperative. For each player $i \in N$, the Shapley value (denoted by $\phi(v)$) is defined as
\begin{align*}
    \phi_i(v) = \frac{1}{|N|!}\sum_{S \in N \setminus \{i\}} |S|! (|N| - |S| - 1)! (v(S + i) - v(S)).
\end{align*}
The Shapley value is the unique measure that satisfies the following four axioms:

\noindent \textbf{Monotonicity: } Let $v$ and $w$ be two value functions, and $i \in N$ be some player. If for all $S \subseteq N \setminus \{i\}$, we have $v(S \cup \{i\}) - v(S) \ge w(S \cup \{i\}) - w(S)$, then $\phi_i(v) \ge \phi_i(w)$.

\noindent \textbf{Symmetry (Shapley): } Let $v$ be a value function, and $i, j \in N$ be two players. If for all $S \subseteq N \setminus \{i, j\}$, we have $v(S \cup \{i\}) = v(S \cup \{j\})$, then $\phi_i(v) = \phi_j(v)$.

\noindent \textbf{Null Feature (Shapley): } Let $v$ be any value function and $i \in N$ be some player. If for all $S \subseteq N \setminus \{i\}$, $v(S \cup \{i\}) - v(S) = 0$, then $\phi_i(v) = 0$.

\noindent \textbf{Efficiency: } For any value function $v$, $\sum_{i \in N} \phi_i(v) = 1$.

Note immediately that the Shapley value is computed by studying the marginal contribution of a player $i$ to an arbitrary set $S$. This means, to compute the Shapley value, we will need sets other than the minimal winning sets (or the abductive explanations). The same can be said about the axioms Null Feature (Shapley) and Monotonicity. 

% It is also worth noting that computing the Shapley value is \#P-hard even for simple value functions $v$ \citep{deng1994}. This implies that 

This rules the Shapley value out as an abductive explanation aggregator. It may however be possible to relax the definition of abductive explanations such that the Shapley value becomes a valid aggregator; we leave this question for future work.

\section{Additional Experimental Results and Details}\label{app:experiments}

\begin{table*}[t]
    \centering
  \begin{tabular}{p{0.5cm} p{0.9cm}p{0.9cm}p{0.9cm}p{0.9cm}p{0.9cm}p{0.9cm}p{0.9cm}p{0.9cm}p{0.9cm}p{0.9cm}p{0.9cm}p{0.9cm}p{0.9cm}}
    \toprule
    \multirow{2}{*}{Features} &
      \multicolumn{3}{c}{LIME (\%)} &
      \multicolumn{3}{c}{Responsibility (\%)} &
      \multicolumn{3}{c}{Holler-Packel (\%)} &
      \multicolumn{3}{c}{Deegan-Packel (\%)} \\
      & {1st} & {2nd} & {3rd}  & {1st} & {2nd} & {3rd}  & {1st} & {2nd} & {3rd}  & {1st} & {2nd} & {3rd}   \\
      \midrule
  Race  & 0.0 & 0.984 & 0.016 & 0.912 & 0.088 & 0.0 & 0.849 & 0.142 & 0.009 & 0.849 & 0.142 & 0.009 \\ 
UC1  & 1.0 & 0.0 & 0.0 & 0.567 & 0.433 & 0.0 & 0.151 & 0.849 & 0.0 & 0.151 & 0.849 & 0.0 \\ 
UC2  & 0.0 & 0.0 & 0.0 & 0.0 & 0.0 & 0.0 & 0.0 & 0.0 & 0.0 & 0.0 & 0.0 & 0.0 \\ 
    \bottomrule
  \end{tabular}
  \vspace{0.2cm}
   \caption{This table shows the results of the LIME attack experiment on the Compas dataset. Each row represents the frequency of occurrence of either a sensitive feature (\emph{Race}) or an uncorrelated feature (\emph{UC1},\emph{UC2}) in the top 3 positions when ranked based on their LIME scores, Responsibility indices, Holler-Packel indices, and Deegan-Packel indices.
   }
    \label{tab:lime_compas1}   
\end{table*}

\begin{table*}[t]
    \centering
  \begin{tabular}{p{0.5cm} p{0.9cm}p{0.9cm}p{0.9cm}p{0.9cm}p{0.9cm}p{0.9cm}p{0.9cm}p{0.9cm}p{0.9cm}p{0.9cm}p{0.9cm}p{0.9cm}p{0.9cm}}
    \toprule
    \multirow{2}{*}{Features} &
      \multicolumn{3}{c}{SHAP (\%)} &
      \multicolumn{3}{c}{Responsibility (\%)} &
      \multicolumn{3}{c}{Holler-Packel (\%)} &
      \multicolumn{3}{c}{Deegan-Packel (\%)} \\
      & {1st} & {2nd} & {3rd}  & {1st} & {2nd} & {3rd}  & {1st} & {2nd} & {3rd}  & {1st} & {2nd} & {3rd}   \\
      \midrule
   Race  & 0.199 & 0.476 & 0.058 & 0.949 & 0.044 & 0.007 & 0.864 & 0.052 & 0.061 & 0.864 & 0.074 & 0.049 \\ 
UC1  & 0.796 & 0.172 & 0.032 & 0.564 & 0.27 & 0.154 & 0.151 & 0.379 & 0.233 & 0.151 & 0.521 & 0.09 \\ 
UC2  & 0.0 & 0.0 & 0.002 & 0.028 & 0.379 & 0.439 & 0.003 & 0.251 & 0.4 & 0.003 & 0.241 & 0.085 \\ 
    \bottomrule
  \end{tabular}
    \vspace{0.2cm}
   \caption{This table shows the results of the SHAP attack experiment on the Compas dataset. Each row represents the frequency of occurrence of either a sensitive feature (\emph{Race}) or an uncorrelated feature (\emph{UC1},\emph{UC2}) in the top 3 positions when ranked based on their SHAP scores, Responsibility indices, Holler-Packel indices, and Deegan-Packel indices.
   }
    \label{tab:shap_compas1}  
    \vspace{-5mm}
\end{table*}

\subsection{Experimental Results with Different seeds}
\Cref{tab:lime_compas21}, \Cref{tab:shap_compas21}, \Cref{tab:lime_german11}, \Cref{tab:shap_german11},
\Cref{tab:lime_compas11}, and 
\Cref{tab:shap_compas11} show the statistically significant results computed for all our experiments. 
To obtain these results, we generate 10 variations of the Compas and German Credit datasets for each attack experiment using 10 seeds. For each explanation method and dataset, we report the mean and standard deviation of the frequency of occurrence of the sensitive feature and uncorrelated features in the top 3 positions
when ranked based on their feature importance scores.

\begin{table*}[ht]
    \centering
  \begin{tabular}{p{1.0cm} p{0.4cm}p{0.9cm}p{0.9cm}p{0.9cm}p{0.9cm}p{0.9cm}p{0.9cm}p{0.9cm}p{0.9cm}p{0.9cm}p{0.9cm}p{0.9cm}p{0.9cm}}
    \toprule
    \multirow{2}{*}{Features} &
      \multicolumn{3}{c}{Lime (\%)} &
      \multicolumn{3}{c}{Responsibility (\%)} &
      \multicolumn{3}{c}{Holler-Packel (\%)} &
      \multicolumn{3}{c}{Deegan-Packel (\%)} \\
      & {1st} & {2nd} & {3rd}  & {1st} & {2nd} & {3rd}  & {1st} & {2nd} & {3rd}  & {1st} & {2nd} & {3rd}   \\
      \midrule
Race & 0.0 ± 0.0& 0.0 ± 0.0& 0.09 ± 0.24& 0.91 ± 0.01& 0.09 ± 0.01& 0.0 ± 0.0& 0.83 ± 0.01& 0.16 ± 0.01& 0.02 ± 0.0& 0.83 ± 0.01& 0.16 ± 0.01& 0.02 ± 0.0 \\ 
UC1 & 0.49 ± 0.01& 0.51 ± 0.01& 0.0 ± 0.0& 0.59 ± 0.03& 0.41 ± 0.03& 0.0 ± 0.0& 0.17 ± 0.01& 0.83 ± 0.01& 0.0 ± 0.0& 0.17 ± 0.01& 0.83 ± 0.01& 0.0 ± 0.0 \\ 
UC2 & 0.51 ± 0.01& 0.49 ± 0.01& 0.0 ± 0.0& 0.59 ± 0.03& 0.41 ± 0.03& 0.0 ± 0.0& 0.17 ± 0.01& 0.83 ± 0.01& 0.0 ± 0.0& 0.17 ± 0.01& 0.83 ± 0.01& 0.0 ± 0.0 \\ 
    \bottomrule
  \end{tabular}
  %\vspace{0.2cm}
   \caption{This table shows the results of the LIME attack experiment on the Compas dataset. Each row represents the mean $\pm$ standard deviation of frequency of occurrence of either a sensitive feature (\emph{Race}) or an uncorrelated feature (\emph{UC1},\emph{UC2}) in the top 3 positions when ranked based on their LIME scores, Responsibility indices, Holler-Packel indices, and Deegan-Packel indices. The mean and standard deviation of frequency of occurence is computed over 10 datasets generated using 10 different seeds.
}
    \label{tab:lime_compas21}   
\end{table*}

% compas SHAP
\begin{table*}[ht]
    \centering
  \begin{tabular}{p{0.5cm} p{0.9cm}p{0.9cm}p{0.9cm}p{0.9cm}p{0.9cm}p{0.9cm}p{0.9cm}p{0.9cm}p{0.9cm}p{0.9cm}p{0.9cm}p{0.9cm}p{0.9cm}}
    \toprule
    \multirow{2}{*}{Features} &
      \multicolumn{3}{c}{SHAP (\%)} &
      \multicolumn{3}{c}{Responsibility (\%)} &
      \multicolumn{3}{c}{Holler-Packel (\%)} &
      \multicolumn{3}{c}{Deegan-Packel (\%)} \\
      & {1st} & {2nd} & {3rd}  & {1st} & {2nd} & {3rd}  & {1st} & {2nd} & {3rd}  & {1st} & {2nd} & {3rd}   \\
      \midrule
 Race & 0.24 ± 0.06& 0.23 ± 0.11& 0.09 ± 0.05& 0.93 ± 0.01& 0.06 ± 0.01& 0.01 ± 0.0& 0.84 ± 0.02& 0.05 ± 0.02& 0.06 ± 0.01& 0.84 ± 0.02& 0.06 ± 0.02& 0.07 ± 0.01 \\ 
UC1 & 0.29 ± 0.06& 0.15 ± 0.09& 0.11 ± 0.04& 0.67 ± 0.05& 0.26 ± 0.05& 0.07 ± 0.02& 0.17 ± 0.02& 0.43 ± 0.1& 0.23 ± 0.04& 0.17 ± 0.02& 0.5 ± 0.09& 0.13 ± 0.03 \\ 
UC2 & 0.29 ± 0.06& 0.18 ± 0.08& 0.13 ± 0.06& 0.64 ± 0.05& 0.27 ± 0.05& 0.08 ± 0.04& 0.17 ± 0.02& 0.41 ± 0.1& 0.23 ± 0.04& 0.17 ± 0.02& 0.47 ± 0.1& 0.13 ± 0.04 \\  
    \bottomrule
  \end{tabular}
   % \vspace{0.2cm}
   \caption{This table shows the results of the SHAP attack experiment on the Compas dataset. Each row represents the mean $\pm$ standard deviation of the frequency of occurrence of either a sensitive feature (\emph{Race}) or an uncorrelated feature (\emph{UC1},\emph{UC2}) in the top 3 positions when ranked based on their SHAP scores, Responsibility indices, Holler-Packel indices, and Deegan-Packel indices. The mean and standard deviation of frequency of occurrence is computed over 10 datasets generated using 10 different seeds.
   }
    \label{tab:shap_compas21}   
\end{table*}

\begin{table*}[ht]
    \centering
  \begin{tabular}{p{0.5cm} p{0.9cm}p{0.9cm}p{0.9cm}p{0.9cm}p{0.9cm}p{0.9cm}p{0.9cm}p{0.9cm}p{0.9cm}p{0.9cm}p{0.9cm}p{0.9cm}p{0.9cm}}
    \toprule
    \multirow{2}{*}{Features} &
      \multicolumn{3}{c}{Lime (\%)} &
      \multicolumn{3}{c}{Responsibility (\%)} &
      \multicolumn{3}{c}{Holler-Packel (\%)} &
      \multicolumn{3}{c}{Deegan-Packel (\%)} \\
      & {1st} & {2nd} & {3rd}  & {1st} & {2nd} & {3rd}  & {1st} & {2nd} & {3rd}  & {1st} & {2nd} & {3rd}   \\
      \midrule
Gender & 0.0 ± 0.0& 0.59 ± 0.3& 0.04 ± 0.08& 0.6 ± 0.33& 0.34 ± 0.24& 0.06 ± 0.14& 0.53 ± 0.39& 0.32 ± 0.22& 0.15 ± 0.21& 0.53 ± 0.39& 0.21 ± 0.23& 0.2 ± 0.18 \\ 
LR & 1.0 ± 0.0& 0.0 ± 0.0& 0.0 ± 0.0& 0.44 ± 0.28& 0.46 ± 0.18& 0.1 ± 0.14& 0.33 ± 0.35& 0.49 ± 0.26& 0.15 ± 0.2& 0.33 ± 0.35& 0.36 ± 0.35& 0.2 ± 0.16 \\ 
    \bottomrule
  \end{tabular}
  %\vspace{0.2cm}
   \caption{This table shows the results of the LIME attack experiment on the German Credit dataset. Each row represents the mean $\pm$ standard deviation of the frequency of occurrence of either a sensitive feature (\emph{Gender}) or an uncorrelated feature (\emph{LoanRateAsPercentOfIncome}) in the top 3 positions when ranked based on their LIME scores, Responsibility indices, Holler-Packel indices, and Deegan-Packel indices. The mean and standard deviation of frequency of occurrence is computed over 10 datasets generated using 10 different seeds.
   }
    \label{tab:lime_german11}   
\end{table*}

%% german SHAP
\begin{table*}[ht]
    \centering
  \begin{tabular}{p{0.5cm} p{0.9cm}p{0.9cm}p{0.9cm}p{0.9cm}p{0.9cm}p{0.9cm}p{0.9cm}p{0.9cm}p{0.9cm}p{0.9cm}p{0.9cm}p{0.9cm}p{0.9cm}}
    \toprule
    \multirow{2}{*}{Features} &
      \multicolumn{3}{c}{SHAP (\%)} &
      \multicolumn{3}{c}{Responsibility (\%)} &
      \multicolumn{3}{c}{Holler-Packel (\%)} &
      \multicolumn{3}{c}{Deegan-Packel (\%)} \\
      & {1st} & {2nd} & {3rd}  & {1st} & {2nd} & {3rd}  & {1st} & {2nd} & {3rd}  & {1st} & {2nd} & {3rd}   \\
      \midrule
Gender & 0.02 ± 0.03& 0.56 ± 0.07& 0.03 ± 0.02& 0.25 ± 0.28& 0.42 ± 0.09& 0.25 ± 0.18& 0.18 ± 0.23& 0.33 ± 0.07& 0.25 ± 0.12& 0.18 ± 0.23& 0.26 ± 0.07& 0.15 ± 0.1 \\ 
LR & 0.82 ± 0.04& 0.02 ± 0.03& 0.03 ± 0.04& 0.23 ± 0.27& 0.43 ± 0.11& 0.26 ± 0.17& 0.15 ± 0.22& 0.35 ± 0.08& 0.27 ± 0.13& 0.15 ± 0.21& 0.3 ± 0.06& 0.14 ± 0.09 \\ 
    \bottomrule
  \end{tabular}
  %\vspace{0.2cm}
   \caption{This table shows the results of the SHAP attack experiment on the German Credit dataset. Each row represents the mean $\pm$ standard deviation of the frequency of occurrence of either a sensitive feature (\emph{Gender}) or an uncorrelated feature (\emph{LoanRateAsPercentOfIncome}) in the top 3 positions when ranked based on their SHAP scores, Responsibility indices, Holler-Packel indices, and Deegan-Packel indices. The mean and standard deviation of frequency of occurrence is computed over 10 datasets generated using 10 different seeds.
   }
    \label{tab:shap_german11}   
\end{table*}

\begin{table*}[t]
    \centering
  \begin{tabular}{p{0.5cm} p{0.9cm}p{0.9cm}p{0.9cm}p{0.9cm}p{0.9cm}p{0.9cm}p{0.9cm}p{0.9cm}p{0.9cm}p{0.9cm}p{0.9cm}p{0.9cm}p{0.9cm}}
    \toprule
    \multirow{2}{*}{Features} &
      \multicolumn{3}{c}{LIME (\%)} &
      \multicolumn{3}{c}{Responsibility (\%)} &
      \multicolumn{3}{c}{Holler-Packel (\%)} &
      \multicolumn{3}{c}{Deegan-Packel (\%)} \\
      & {1st} & {2nd} & {3rd}  & {1st} & {2nd} & {3rd}  & {1st} & {2nd} & {3rd}  & {1st} & {2nd} & {3rd}   \\
      \midrule
 Race & 0.0 ± 0.0& 0.96 ± 0.01& 0.04 ± 0.01& 0.91 ± 0.01& 0.09 ± 0.01& 0.0 ± 0.0& 0.82 ± 0.01& 0.16 ± 0.01& 0.02 ± 0.01& 0.82 ± 0.01& 0.16 ± 0.01& 0.02 ± 0.01 \\ 
UC1 & 1.0 ± 0.0& 0.0 ± 0.0& 0.0 ± 0.0& 0.6 ± 0.02& 0.4 ± 0.02& 0.0 ± 0.0& 0.18 ± 0.01& 0.82 ± 0.01& 0.0 ± 0.0& 0.18 ± 0.01& 0.82 ± 0.01& 0.0 ± 0.0 \\ 
UC2 & 0.0 ± 0.0& 0.0 ± 0.0& 0.0 ± 0.0& 0.0 ± 0.0& 0.0 ± 0.0& 0.0 ± 0.0& 0.0 ± 0.0& 0.0 ± 0.0& 0.0 ± 0.0& 0.0 ± 0.0& 0.0 ± 0.0& 0.0 ± 0.0 \\ 
    \bottomrule
  \end{tabular}
  \vspace{0.2cm}
   \caption{This table shows the results of the LIME attack experiment on the Compas dataset. Each row represents the mean and standard deviation of the frequency of occurrence of either a sensitive feature (\emph{Race}) or an uncorrelated feature (\emph{UC1},\emph{UC2}) in the top 3 positions when ranked based on their LIME scores, Responsibility indices, Holler-Packel indices, and Deegan-Packel indices. The mean and standard deviation of frequency of occurrence is computed over 10 datasets generated using 10 different seeds.
   }
    \label{tab:lime_compas11}   
\end{table*}

\begin{table*}[t]
    \centering
  \begin{tabular}{p{0.5cm} p{0.9cm}p{0.9cm}p{0.9cm}p{0.9cm}p{0.9cm}p{0.9cm}p{0.9cm}p{0.9cm}p{0.9cm}p{0.9cm}p{0.9cm}p{0.9cm}p{0.9cm}}
    \toprule
    \multirow{2}{*}{Features} &
      \multicolumn{3}{c}{SHAP (\%)} &
      \multicolumn{3}{c}{Responsibility (\%)} &
      \multicolumn{3}{c}{Holler-Packel (\%)} &
      \multicolumn{3}{c}{Deegan-Packel (\%)} \\
      & {1st} & {2nd} & {3rd}  & {1st} & {2nd} & {3rd}  & {1st} & {2nd} & {3rd}  & {1st} & {2nd} & {3rd}   \\
      \midrule
  Race & 0.24 ± 0.06& 0.23 ± 0.11& 0.09 ± 0.05& 0.93 ± 0.01& 0.06 ± 0.01& 0.01 ± 0.0& 0.84 ± 0.02& 0.05 ± 0.02& 0.06 ± 0.01& 0.84 ± 0.02& 0.06 ± 0.02& 0.07 ± 0.01 \\ 
UC1 & 0.29 ± 0.06& 0.15 ± 0.09& 0.11 ± 0.04& 0.67 ± 0.05& 0.26 ± 0.05& 0.07 ± 0.02& 0.17 ± 0.02& 0.43 ± 0.1& 0.23 ± 0.04& 0.17 ± 0.02& 0.5 ± 0.09& 0.13 ± 0.03 \\ 
UC2 & 0.29 ± 0.06& 0.18 ± 0.08& 0.13 ± 0.06& 0.64 ± 0.05& 0.27 ± 0.05& 0.08 ± 0.04& 0.17 ± 0.02& 0.41 ± 0.1& 0.23 ± 0.04& 0.17 ± 0.02& 0.47 ± 0.1& 0.13 ± 0.04 \\ 
    \bottomrule
  \end{tabular}
    \vspace{0.2cm}
   \caption{This table shows the results of the SHAP attack experiment on the Compas dataset. Each row represents the mean $\pm$ standard deviation of the frequency of occurrence of either a sensitive feature (\emph{Race}) or an uncorrelated feature (\emph{UC1},\emph{UC2}) in the top 3 positions when ranked based on their SHAP scores, Responsibility indices, Holler-Packel indices, and Deegan-Packel indices. The mean and standard deviation of frequency of occurrence is computed over 10 datasets generated using 10 different seeds.
   }
    \label{tab:shap_compas11}  
    \vspace{-5mm}
\end{table*}

\subsection{Attack Model Details}
We now describe the implementation of the adversarial LIME and SHAP attacks \citep{Slack2020Shap}. 
Recall that the adversarial attack model for both, LIME and SHAP attacks, consists of three main components, i.e., the biased classifier, the unbiased classifier, and an out-of-distribution (OOD) classifier.
We discuss the construction of these three components below.

In all our experiments, we construct the biased classifier as a single decision layer that predicts the label based on the sensitive feature of the dataset. In the case of the Compas dataset, the biased classifier predicts the target label for each defendant based on their \emph{race}, whereas, for the German Credit dataset, the biased classifier predicts whether a candidate is good or bad solely based on the \emph{gender} of the candidate.
The unbiased classifier, on the other hand, is also a single decision layer that predicts the target label based on features that are uncorrelated with the sensitive feature in the dataset. In the case of the Compas dataset, we consider two instances, one with a single synthetic uncorrelated feature and another with two synthetic uncorrelated features. 
In experiments where we have two uncorrelated features, the unbiased classifier predicts the target label based on the XOR value of the two uncorrelated features. 
Similarly, in the case of the German Credit dataset, the unbiased classifier uses \emph{LoanRateAsPercentOfIncome} as the uncorrelated feature for predicting the target label. 

To train the OOD classifier, we need a dataset that consists of both, in-distribution and OOD data points.  In the LIME attack experiment, we construct a perturbed dataset by sampling perturbations from the standard Multivariate Normal distribution $\mathcal{N}(\vec{0},\vec{1})$ and adding it to each point in the original train dataset. On the other hand, in the SHAP attack experiment, we choose a random subset of features in each record and replace them with the values from the background distribution. Here, the background distribution is learned from the data using the K-means clustering algorithm with 10 clusters. We re-label each data point of the perturbed dataset as \emph{OOD} and each data point of the original dataset as \emph{not OOD}. We append the perturbed dataset to the original dataset to construct a new dataset. We use this new dataset to train an OOD classifier to predict if a given data point is OOD.

In both, LIME and SHAP attack experiments, we use the standard Sklearn XGBoost trees implementation \citep{scikit-learn} with $m$ estimators to train the OOD classifier, where, $m$ varies for every dataset.

As stated earlier in \cref{sec:expts}, the final LIME and SHAP attack models use their respective OOD classifiers to determine if the input data point is OOD or not. 
If the data point is OOD, the attack model uses the unbiased classifier to predict the label for the input; otherwise, it uses the biased classifier.

Our code for the adversarial attack model is an adaptation of the publicly available code for attacks on LIME and SHAP \citep{Slack2020Shap,silva2022treeensembles, inms-corr19, inms-rcra20}

\subsection{Generating Abductive Explanations for the Adversarial Attack Model}\label{sec:additional_details}

Generating the set of all abductive explanations for a given data point is intractable in theory, due to the exponential 
number of explanations in the worst-case for most of the classification models. Fortunately in practice, the number of explanations is often not large and listing the complete set of explanations can be achieved in a short/practical time.
The most effective approach to enumerate abductive explanations 
is the MARCO algorithm~\cite{LiffitonPMM16} that exploits the hitting set duality between abductive and contrastive (also referred as counterfactual)\footnote{Contrastive explanations broadly provide what changes should be made in the input data to flip the prediction.} explanations~\cite{IgnatievNA020}. 
Intuitively, the algorithm iteratively calls a SAT oracle to pick a candidate set of features for either finding one abductive or one contrastive explanation. The resulting explanation is then used to block future assignments in the SAT formula from repeating identified in the next iterations. 
%
% Besides, to overcome the limitation of the enumeration problem, one can consider a partial enumeration of abductive explanations and thus compute an approximate ranking of the feature importance explanations.
%For our experimental evaluations, we implement the MARCO approach for XGboost tree. 

\subsection{Additional Implementation Details}

See Tables \ref{tab:compas_hyperparameters} and \ref{tab:german_hyperparameters} for the adversarial models and datasets. 
 \begin{table}[]
        \centering
\begin{tabular}{ p{6cm}p{7cm}}
 \hline
 \multicolumn{2}{c}{Compas Dataset (1 and 2 uncorrelated feature)} \\
 \hline
 Parameters & Values\\
 \hline
 \# Train data points (LIME Attack) & 266592 \\
 \# Test data points (LIME Attack) & 618\\
  \# Train data points (SHAP Attack) & 630432 \\
 \# Test data points (SHAP Attack) & 618 \\
 OOD classifier train accuracy (LIME Attack with 1 uncorrelated feature) & 0.99 \\
 OOD classifier train accuracy (LIME Attack with 2 uncorrelated features) & 0.99 \\
 OOD classifier train accuracy (SHAP Attack with 1 uncorrelated feature) & 0.923 \\
 OOD classifier train accuracy (SHAP Attack with 2 uncorrelated features) & 0.931 \\  
  OOD classifier test accuracy (LIME Attack with 1 uncorrelated feature) & 0.849 \\
  OOD classifier test accuracy (LIME Attack with 2 uncorrelated features) & 0.843 \\
 OOD classifier test accuracy (SHAP Attack with 1 uncorrelated feature) & 0.854 \\
  OOD classifier test accuracy (SHAP Attack with 2 uncorrelated features) & 0.855 \\  
  \# Percent of OOD points (LIME Attack) & 0.5 \\
 \# Percent of OOD points (SHAP Attack) & 0.26 \\
 Features & age, two\_year\_recid, priors\_count, length\_of\_stay, c\_charge\_degree\_F, c\_charge\_degree\_M, sex\_Female, sex\_Male, race, unrelated\_column\_one, unrelated\_column\_two \\
 Features perturbed in Lime attack & age, priors\_count, length\_of\_stay \\
 Features perturbed in SHAP attack & two\_year\_recid, priors\_count, length\_of\_stay, c\_charge\_degree\_F, c\_charge\_degree\_M, sex\_Female, sex\_Male, race, unrelated\_column\_one, unrelated\_column\_two \\
 OOD classifier model for Lime Attack & Sklearn's Xgboost Classifier (n\_estimators=100, max\_depth=3,  max\_depth: 3, random\_state:10, seed:10) \\
 OOD classifier model for SHAP Attack & Sklearn's Xgboost Classifier (n\_estimators=100, max\_depth=3,  max\_depth: 3, random\_state:10, seed:10) \\
 Sensitive Feature & race \\
 Uncorrelated Features & unrelated\_column\_one, unrelated\_column\_two \\
 \hline
\end{tabular}
% \caption{}       
   \vspace{0.2cm}
        \caption{Hyperparameters used in Lime attack and SHAP attack experiments for Compas dataset}
        \label{tab:compas_hyperparameters}
    \end{table}

\begin{table}[]
    \centering
\begin{tabular}{ p{6cm}p{7cm}}
 \hline
 \multicolumn{2}{c}{German Dataset} \\
 \hline
 Parameters & Values\\
 \hline
 \# Train data points (LIME Attack) & 43200 \\
 \# Test data points (SHAP Attack) & 100\\
  \# Train data points (LIME Attack) & 47200 \\
 \# Test data points (SHAP Attack) & 100 \\
 OOD classifier train accuracy (LIME Attack) & 0.9998 \\
  OOD classifier test accuracy (LIME Attack) & 1.0 \\
 OOD classifier train accuracy (SHAP Attack) & 0.996 \\
  OOD classifier test accuracy (SHAP Attack) & 0.86 \\  
   \# Percent of OOD points (LIME Attack) &  0.5\\
 \# Percent of OOD points (SHAP Attack) & 0.85\\
 Features & ForeignWorker, Age, LoanAmount, NumberOfLiableIndividuals, Gender, CheckingAccountBalance\_geq\_200, LoanDuration, YearsAtCurrentHome, HasGuarantor, NumberOfOtherLoansAtBank, OtherLoansAtStore, LoanRateAsPercentOfIncome
\\
Features perturbed in LIME attack & Age, LoanAmount, NumberOfLiableIndividuals, LoanDuration,YearsAtCurrentHome, NumberOfOtherLoansAtBank
 \\
 Features perturbed in SHAP attack & Age, LoanAmount, NumberOfLiableIndividuals, LoanDuration,YearsAtCurrentHome, NumberOfOtherLoansAtBank\\
 OOD classifier model for Lime Attack & Sklearn's Xgboost Classifier (n\_estimators=50, max\_depth=3,  max\_depth: 3, random\_state:10, seed:10) \\
 OOD classifier model for SHAP Attack & Sklearn's Xgboost Classifier (n\_estimators=50, max\_depth=3,  max\_depth: 3, random\_state:10, seed:10) \\
  Sensitive Feature & Gender \\
 Uncorrelated Features & LoanRateAsPercentOfIncome \\
 \hline
\end{tabular}
  \vspace{0.2cm}
 \caption{Hyperparameters used in Lime attack and SHAP attack experiments for German Credit dataset}
% ÷    
\label{tab:german_hyperparameters}
\end{table}

\subsection{Code}
The code for reproducing the results can be found at \href{https://shorturl.at/tJT09}{https://shorturl.at/tJT09}.

\subsection{Machine Specifications}
We ran all the experiments on MacBook Air (M2 2022) with 16GB Memory and 8 cores. The total computational time $\sim$24 hours.

\end{document}